\newcommand*{\ShowNotes}{}
  \newcommand{\colornote}[3]{{\color{#1}\bf{#2: #3}\normalfont}}
  \newcommand{\colornote}[3]{}
\begin{document}
\def\thefootnote{*}\footnotetext{These authors contributed equally to this work.}\def\thefootnote{\arabic{footnote}}
\nolinenumbers
\title{The Hidden Attention of Mamba Models}

\titlerunning{The Hidden Attention of Mamba Models}

\author{Ameen Ali$^*$ \and Itamar Zimerman$^*$\and Lior Wolf}


\authorrunning{Ali, Zimerman, Wolf}

\institute{School of Computer Science, Tel Aviv University}

\maketitle

\begin{abstract}
The Mamba layer offers an efficient selective state space model (SSM) that is highly effective in modeling multiple domains, including NLP, long-range sequence processing, and computer vision. Selective SSMs are viewed as dual models, in which one trains in parallel on the entire sequence via an IO-aware parallel scan, and deploys in an autoregressive manner. We add a third view and show that such models can be viewed as attention-driven models. This new perspective enables us to empirically and theoretically compare the underlying mechanisms to that of the self-attention layers in transformers and allows us to peer inside the inner workings of the Mamba model with explainability methods. 
Our code is publicly available\footnote{\url{https://github.com/AmeenAli/HiddenMambaAttn}}.
\end{abstract}

\section{Introduction}
Recently, Selective State Space Layers~\cite{gu2023mamba}, also known as Mamba models, have shown remarkable performance in diverse applications including language modeling~\cite{gu2023mamba,mambamoe1,mambamoe2,wang2024mambabyte}, image processing~\cite{mambaViT1,mambaViT2}, video processing~\cite{mambaVideo}, medical imaging~\cite{mambaMedical7,mambaMedical5,mambaMedical8,mambaMedical4,mambaMedical3,mambaMedical2,mambaMedical1}, tabular data~\cite{ahamed2024mambatab}, point-cloud analysis~\cite{mambaPoint}, graphs~\cite{mambaGraph1,mambaGraph2}, N-dimensional sequence modeling~\cite{mambaND} and more. Characterized by their linear complexity in sequence length during training and fast RNN-like computation during inference (left and middle panels of  Fig.~\ref{fig:viewsOfmamba}), 
Mamba models offer a 5x increase in the throughput of Transformers for auto-regressive generation and the ability to efficiently handle long-range dependencies. 

Despite their growing success, the information-flow dynamics between tokens in Mamba models and the way they learn remain largely unexplored. Critical questions about their learning mechanisms, particularly how they capture dependencies and their resemblance to other established layers, such as RNNs, CNNs, or attention mechanisms, remain unanswered. Additionally, the lack of interoperability methods for these models may pose a significant hurdle to debugging them and may also reduce their applicability in socially sensitive domains in which explainability is required. 

Motivated by these gaps, our research aims to provide insights into the dynamics of Mamba models and develop methodologies for their interpretation. While the traditional views of state-space models are through the lens of convolutional or recurrent layers~\cite{gu2021combining}, we show that selective state-space layers are a form of attention models. This is achieved through a novel reformulation of Mamba computation using a data-control linear operator, unveiling hidden attention matrices within the Mamba layer. This enables us to employ well-established interpretability and explainability techniques, commonly used in transformer realms, to devise the first set of tools for interpreting Mamba models. Furthermore, our analysis of implicit attention matrices offers a direct framework for comparing the properties and inner representations of transformers~\cite{NIPS2017_3f5ee243} and Mamba models.

\textbf{Our main contributions} encompass the following main aspects: (i) We shed light on the fundamental nature of Mamba models, by showing that they rely on implicit attention, which is implemented by a unique data-control linear operator, as illustrated in Fig.~\ref{fig:viewsOfmamba} (right). (ii) Our analysis reveals that Mamba models give rise to three orders of magnitude 
more attention matrices than transformers. (iii) We provide a set of explainability and interpretability tools based on these hidden attention matrices. (iv) 
For comparable model sizes, Mamba model-based attention shows comparable explainability metrics results to that of transformers. 
(v) We present a theoretical analysis of the evolution of attention capabilities in state-space models and their expressiveness, offering a deeper understanding of the factors that contribute to Mamba's effectiveness.

 \begin{figure}[t]
\centering
\includegraphics[width=0.92\textwidth]{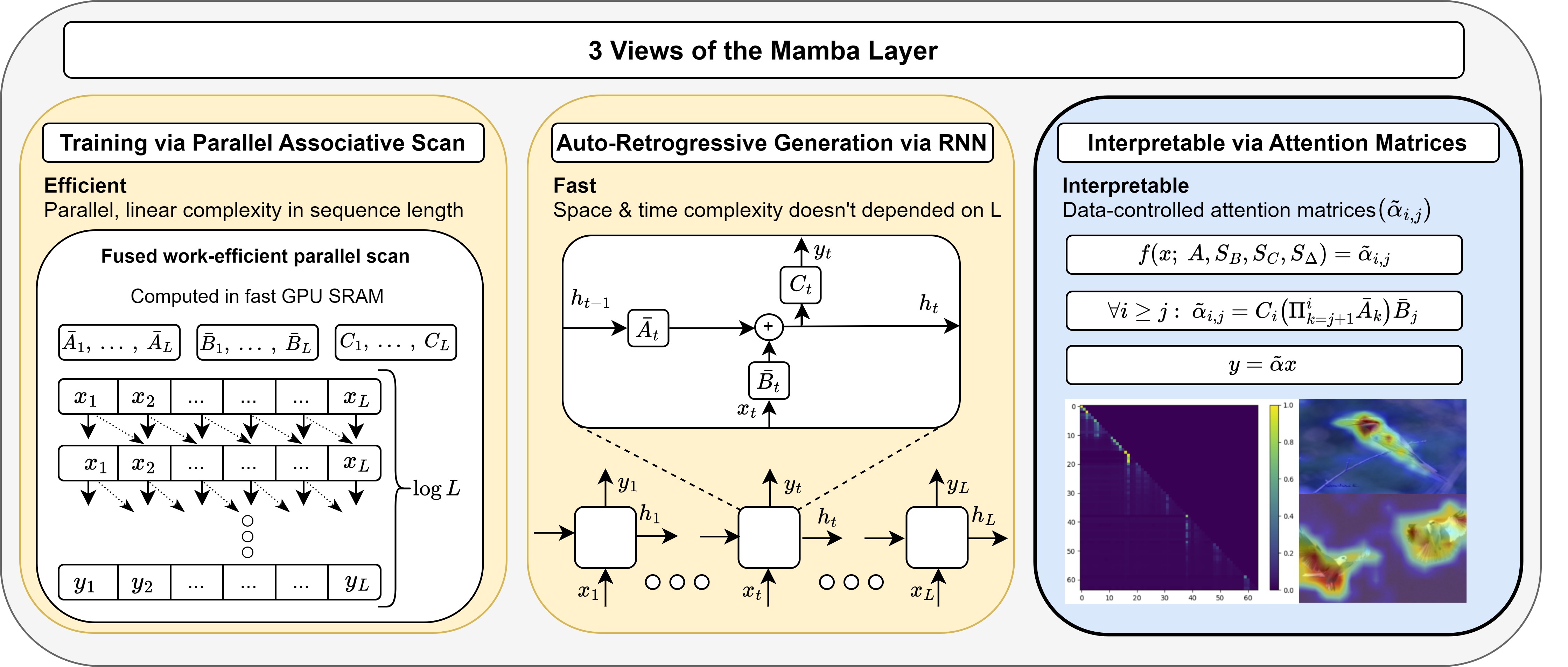}
\caption{Three Perspectives of the Selective State-Space Layer:\textbf{(Left)} Selective State-Space Models (SSMs) can be efficiently computed with linear complexity using parallel scans, allowing for effective parallelization on modern hardware, such as GPUs. \textbf{(Middle)} Similar to SSMs, the selective state-space layer can be computed via a time-variant recurrent rule. \textbf{(Right)} A new view of the selective SSM layer, showing that it uses attention similarly to transformers (see Eq.~\ref{eq:MAMbaASmatmul}). Our view enables the generation of attention maps, offering valuable applications in areas such as XAI.}
\label{fig:viewsOfmamba}
\end{figure}

\section{Background}

{\noindent\textbf{Transformers\quad}}%
The Transformer architecture~\cite{NIPS2017_3f5ee243} is the dominant architecture in the recent NLP and Computer Vision literature. It relies on self-attention to capture dependencies between different tokens. Self-attention allows these models to dynamically focus on different parts of the input sequence, calculating the relevance of each part to others. It can be computed as follows:
\begin{equation} \label{eq:attnMAT}
  Self-Attention(Q, K, V) = \alpha V, \quad \alpha = \text{softmax}\left(\frac{QK^T}{\sqrt{d_k}}\right)
\end{equation}
where \(Q\), \(K\), and \(V\) represent queries, keys, and values, respectively, and \(d_k\) is the dimension of the keys. Additionally, the Transformer utilizes $H$ attention heads to process information in parallel, allowing the model to capture various dependencies. The attention matrix $\alpha$ enables the models to weigh the importance of tokens based on their contribution to the context, and they can also used for interpretability~\cite{bahdanau2014neural}, explainability~\cite{chefer2021transformer}, and improved classification~\cite{touvron2021training,chefer2022optimizing}. 

\medskip
{\noindent\textbf{State-Space Layers\quad}}%
State-Space Layers were first introduced in~\cite{gu2021combining} and have seen significant improvements through the seminal work in ~\cite{gu2021efficiently}. These layers have demonstrated promising results across several domains, including NLP~\cite{mehta2022long,fu2022hungry}, audio generation~\cite{goel2022s}, image processing~\cite{yan2023diffusion, baron20232,nguyen2022s4nd}, long video understanding~\cite{wang2023selective}, RL~\cite{david2022decision,lu2024structured}, speech recognition~\cite{saon2023diagonal}, and more. 
Given one channel of the input sequence $x := (x_1, \cdots, x_L)$ such that $x_i \in \mathbb{R}$, these layers can be implemented using either recurrence or convolution. The recurrent formulation, which relies on the recurrent state $h_t \in \mathbb{R}^N$ where $N$ is the state size, is defined as follows: given the discretization functions $f_A, f_B$, and parameters $A$, $B$, $C$ and $\Delta$, the recurrent rule for the SSM is:
\begin{equation} \label{eq:ssmreccurent}
\bar{A} = f_A(A,\Delta),\quad \bar{B} = f_B(A, B,\Delta), \quad h_t = \bar{A} h_{t-1} + \bar{B} x_t, \quad y_t = C h_t
\end{equation}
This recurrent rule can be expanded as:
\begin{equation} \label{eq:closedFormSSM}
    h_t = \bar{A}^t \bar{B} x_0 +  \bar{A}^{t-1} \bar{B} x_1+ \cdots + \bar{B} x_t, \quad
y_t =  C \bar{A}^t \bar{B} x_0 +  C \bar{A}^{t-1} \bar{B} x_1+ \cdots + C \bar{B} x_t
\end{equation}

Since the recurrence is linear, Eq.~\ref{eq:closedFormSSM} can also be expressed as a convolution, via a convolution kernel $K := (k_1, \cdots, k_L)$, where $k_i = C \bar{A}^{i-1} \bar{B}$, thus allowing sub-quadratic complexity in sequence length.
The equivalence between the recurrence and the convolution provides a versatile framework that enables parallel and efficient training with sub-quadratic complexity with the convolution view, alongside a faster recurrent view, facilitating the acceleration of autoregressive generation by decoupling step complexity from sequence length. As the layer defined as a map from $\mathbb{R^L}$ to $\mathbb{R^L}$, to process $D$ channels the layer employs $D$ independent copies of itself. 

\medskip
{\noindent\textbf{S6 Layers\quad}}%
A recent development in state space layers is selective SSMs~\cite{gu2023mamba} (S6), which show outstanding performance in NLP~\cite{mambamoe1,mambamoe2,wang2024mambabyte}, vision~\cite{mambaViT1,mambaViT2}, graph classification~\cite{mambaGraph1,mambaGraph2}, and more. These models rely on time-variant SSMs, namely, the discrete matrices $\bar{A},\bar{B},$ and $C$ of each channel are modified over the $L$ time steps depending on the input sequence. As opposed to traditional state-space layers, which operate individually on each channel, selective state-space layers compute the SSM matrices $\bar{A}_i, \bar{B}_i, C_i$ for all $i \leq L$ based on all the channels, and then apply the time-variant recurrent rule individually for each channel. Hence, we denote the entire input sequence by $\hat{x} := (\hat{x}_1, \cdots, \hat{x}_L) \in \mathbb{R}^{L \times D}$ where $\hat{x}_i \in \mathbb{R}^{D}$. The per-time discrete matrices $\bar{A_i}, \bar{B_i},$ and $C_i$ are defined as follows:

\begin{equation} \label{eq:TimeVariantMatrices1}
    B_i = S_B (\hat{x}_i), \quad C_i = S_C (\hat{x}_i), \quad \Delta_i = \text{softplus}(S_{\Delta}(\hat{x}_i))
\end{equation}

\begin{equation} \label{eq:discretization}
     f_A(\Delta_i, A) = \exp (\Delta_i A), \quad f_B(\Delta_i, A, B_i) = \Delta_i B_i, 
\end{equation}
\begin{equation}\label{eq:TimeVariantMatrices2}
    \bar{A}_i = f_A(\Delta_i, A), \quad \bar{B}_i = f_B(\Delta_i, A, B_i)
\end{equation}
where $f_A, f_B$ represents the discretization rule, $S_B, S_C, S_{\Delta}$ are linear projection layers, and SoftPlus is an elementwise function that is a smooth approximation of ReLU. While previous state-space layers employ complex-valued SSMs and non-diagonal matrices, Mamba employs real-diagonal parametrization. 

The motivation for input-dependent time-variant layers is to make those recurrent layers more expressive and flexible, allowing them to capture more complex dependencies. While other input-dependent time-variant mechanisms have been proposed in previous works through gated RNNs, the S5 layer~\cite{smith2022simplified}, or adaptive filtering via input-dependent IIR filters~\cite{lutati2023focus}, Mamba significantly improves on these layers by presenting a flexible, yet still efficient, approach. This efficiency was achieved via the IO-aware implementation of associative scans, which can be parallelized on modern hardware via work-efficient parallel scanners~\cite{blelloch1990prefix,scan2}.

\medskip
{\noindent\textbf{Mamba\quad}}%
The Mamba block is built on top of the selective state-space layer, Conv1D and other elementwise operators. Inspired by the architecure of Gated MLP and H3~\cite{fu2022hungry}, and given an input $\hat{x}' := (\hat{x}'_1, \cdots \hat{x}'_L)$ it is defined as follows:
\begin{equation}
    \hat{x} = \text{SiLU( Conv1D( Linear(}\hat{x}'\text{) ) )}, \quad \hat{z} = \text{SiLU( Linear(}\hat{x}'\text{) )} 
\end{equation}
\begin{equation}
    \hat{y}' = \text{Linear}(\text{Selective SSM}(\hat{x}) \otimes \hat{z})), \quad 
    \hat{y}= \text{LayerNorm}(\hat{y}' + \hat{x}')
\label{eq:output_block}
\end{equation}
where $\otimes$ is elementwise multiplication. Mamba models contain $\Lambda$ stacked mamba blocks and $D$ channels per block, and we denote the tensors in the i-th block and j-th channel with a superscript, where the first index refers to the block number.

Inspired by the vision transformer ViT~\cite{dosovitskiy2020image}, both~\cite{mambaViT1,mambaViT2} replace the standard self-attention mechanism by two Mamba layers, where each layer is applied in a bidirectional manner. The resulting model achieves favorable results compared to the standard ViT in terms of both accuracy and efficiency, when comparing models with the same number of parameters.

\medskip
{\noindent\textbf{Explainability\quad}}
Explainability methods have been extensively explored in the context of deep neural networks, particularly in domains such as natural language processing (NLP)\cite{abnar2020quantifying,arras-etal-2017-explaining,yuan2021explaining,ali2022xai,chefer2021transformer,chefer2021generic}, computer vision\cite{selvaraju2017grad,nam2020relative,bach2015pixel,gur2021visualization,sundararajan2017axiomatic}, and attention-based models~\cite{ali2022xai,chefer2021transformer,chefer2021generic,yuan2021explaining}. 

The contributions most closely aligned with ours are those specifically tailored for transformer explainability. Abnar and Zuidema~\cite{abnar2020quantifying} {introduce the {Attention-Rollout} method, which aggregates the attention matrices across different layers by analyzing the paths in the inter-layer pairwise attention graph.} 
Chefer et al.~\cite{chefer2021transformer,chefer2021generic} combine LRP scores~\cite{bach2015pixel} with the attention gradients to obtain {class-specific} relevance scores. Ali et al.~\cite{ali2022xai} enhanced attributions by treating the non-linear Softmax and LayerNorm operators as a constant, thereby attributing relevance exclusively through the value path, disregarding these operators. Yuan et al.~\cite{yuan2021explaining} treats the output token representations as states in a Markov chain in which the transition matrix is built using attention weights.

Our work performs similar attention-based analysis for Mamba and we derive versions of~\cite{abnar2020quantifying,chefer2021transformer} that are suitable for such SSM models.

\section{Method}
In this section, we detail our methodology. First, in section~\ref{sec:S6asAttn}, we reformulate selective state-space (S6) layers as self-attention, enabling the extraction of attention matrices from S6 layers. Subsequently, in sections~\ref{sec:AttnMatForXAI} and~\ref{sec:AttrForXAI}, we demonstrate how these hidden attention matrices can be leveraged to develop class-agnostic and class-specific tools for explainable AI of Mamba models.

\subsection{Hidden Attention Matrices In Selective State Spaces Layers\label{sec:S6asAttn}}
Given the per-channel time-variant system matrices $\bar{A}_1, \cdots, \bar{A}_L$, $\bar{B}_1, \cdots, \bar{B}_L$, and $C_1, \cdots, C_L$ from Eq.~\ref{eq:TimeVariantMatrices1} and~\ref{eq:TimeVariantMatrices2}, each channel within the selective state-space layers can be processed independently. Thus, for simplicity, the formulation presented in this section will proceed under the assumption that the input sequence $x$ consists of a single channel.
 
By considering the initial conditions $h_{0}=0$, unrolling Eq.~\ref{eq:ssmreccurent} 
yields: 

\begin{equation} \label{eq:unrolling1}
 h_1 =  \bar{B}_1  x_1 ,\quad y_1 = C_1 \bar{B}_1 x_{1},\quad  h_2 = \bar{A}_2 \bar{B}_1 x_{1} + \bar{B}_2 x_{2}  ,\quad y_2 = C_2 \bar{A}_2 \bar{B}_1 x_{1} + C_2 \bar{B}_2 x_{2} 
\end{equation}


and in general:

\begin{equation} \label{eq:unrolling3}
 h_t = \sum_{j=1}^t \big{(} \Pi_{k=j+1}^t \bar{A}_k \big{)} \bar{B}_{j}x_j, %
 \quad %
 y_t = C_t\sum_{j=1}^t \big{(} \Pi_{k=j+1}^t \bar{A}_k \big{)} \bar{B}_{j}x_j
\end{equation}

By converting Eq.~\ref{eq:unrolling3} into a matrix form we get:

\begin{equation}\label{eq:MAMbaASmatmul}
y = \tilde{\alpha} x, \quad
\begin{bmatrix}
y_1 \\
y_2\\ 
\vdots \\
y_L \\
\end{bmatrix} 
=
\begin{bmatrix}
    C_1 \bar{B}_1 & 0 & \cdots & 0 \\
    C_2 \bar{A}_2 \bar{B}_1 & C_2 \bar{B}_2 & \cdots & 0 \\
    \vdots & \vdots & \ddots & 0 \\
    C_L \Pi_{k=2}^L \bar{A}_k \bar{B}_{1} \quad & C_L \Pi_{k=3}^L \bar{A}_k \bar{B}_{2} \quad & \cdots \quad & C_L \bar{B}_L
\end{bmatrix}
\begin{bmatrix}
x_1 \\
x_2\\ 
\vdots \\
x_L \\
\end{bmatrix}
\end{equation}

Hence, the S6 layer can be viewed as a data-controlled linear operator~\cite{poli2023hyena}, 
where the matrix $\tilde{\alpha} \in \mathbb{R}^{L \times L}$ is a function of the input and the parameters $A, S_B, S_C ,S_\Delta$. The element at row $i$ and column $j$ captures how $x_j$ influences $y_i$, and is computed by:
\begin{equation}\label{eq:attnPerlocation}
    \tilde{\alpha}_{i,j} = C_i \Big{(}\Pi_{k=j+1}^i \bar{A}_k \Big{)} \bar{B}_j
\end{equation}

Eq.~\ref{eq:MAMbaASmatmul} and \ref{eq:attnPerlocation} link $\tilde{\alpha}$ to the conventional standard attention matrix (Eq.~\ref{eq:attnMAT}), and highlight that S6 can be considered a variant of causal self-attention.

\medskip
\noindent{\textbf{Simplifying and Interpreting the Hidden Matrices }}
Since $\bar{A_t}$ is a diagonal matrix, the different $N$ coordinates of the state $h_t$ in Eq.~\ref{eq:unrolling3} do not interact when computing $h_{t+1}$. Thus, Eq.~\ref{eq:unrolling3} (left) can be computed independently for each coordinate $m \in \{1,2,\dots,N\}$:
\begin{equation} \label{eq:unrolling3cord}
 h_t[m] =\sum_{j=1}^t \big{(} \Pi_{k=j+1}^t \bar{A}_k[m,m] \big{)} \bar{B}_{j}[m] x_j,\quad
 y_t = \sum_{m=1}^N C_t[m]h_t[m]
\end{equation} 
where $C_i[m],A_k[m,m],B_j[m] \in \mathbb{R}$, plugging it into Eq.~\ref{eq:attnPerlocation} yields:

\begin{equation}\label{eq:attnPerCord}
    \tilde{\alpha}_{i,j} = C_i \Big{(} \Pi_{k=j+1}^i  \bar{A}_k \Big{)} \bar{B}_j= \quad \sum_{m=1}^N C_i[m]  \Big{(}\Pi_{k=j+1}^i \bar{A}_k[m,m] \Big{)} \bar{B}_j [m]
\end{equation}

Note that while equations~\ref{eq:unrolling3} and~\ref{eq:attnPerlocation} contain matrix multiplication, Eq.~\ref{eq:unrolling3cord} relies on elementwise multiplication.

An interesting observation arising from Eq.~\ref{eq:attnPerCord} is that a single channel of S6 produces $N$ inner attention matrices $C_i[m]  \Big{(}\Pi_{k=j+1}^i \bar{A}_k[m,m]  \Big{)}\bar{B}_j[m]$, which are summed up over $m$ to obtain $\tilde{\alpha}$.  
In contrast, in the Transformer, a single attention matrix is produced by each of the $H$ attention heads. Given that the number of channels in Mamba models $D$ is typically a hundred times greater than the number of heads in a transformer (for example, Vision-Mamba-Tiny has $D=384$ channels, compared to $H=3$ heads in DeiT-Tiny), the Mamba layer generates approximately $\frac{DN}{H}\approx 100N$ more attention matrices than the original self-attention layer.

To further understand the structure and characterization of these attention matrices, we will express the hidden attention matrices $\tilde{\alpha}$ for each channel $d$ as a direct function of the input $\hat{x}$. To do so, we first substitute Eq.\ref{eq:TimeVariantMatrices1},~\ref{eq:discretization} and Eq.\ref{eq:TimeVariantMatrices2} into Eq.~\ref{eq:attnPerlocation}, and obtain:

\begin{equation}\label{eq:attnFinal}
    \tilde{\alpha}_{i,j} =
    S_{C} (\hat{x}_i) \Big{(} \Pi_{k=j+1}^i \exp\Big{(}\text{softplus}(S_{\Delta}(\hat{x}_k)) A \Big{)}\Big{)}  \text{softplus}(S_{\Delta}(\hat{x}_j)) S_{B}  (\hat{x}_j) =
\end{equation}
\begin{equation}\label{eq:attnFinal2}
    S_{C} (\hat{x}_i)  \Big{(} \exp\ \Big{(} \sum_{k=j+1}^i \text{softplus}(S_{\Delta}(\hat{x}_k)) \Big{)} A \Big{)}  \text{softplus}(S_{\Delta}(\hat{x}_j)) S_{B}  (\hat{x}_j)
\end{equation}

For simplicitly, we propose a simplification of Eq.~\ref{eq:attnFinal2} by substituting the softplus function with the ReLU function, and summing only over positive elements:

\begin{equation}\label{eq:attnFinal2Simplified}
    \tilde{\alpha}_{i,j} \approx S_{C} (\hat{x}_i)  \left( \exp\ \left( \sum_{\substack{k=j+1 \\ S_{\Delta}(\hat{x}_k) > 0}}^i S_{\Delta}(\hat{x}_k) \right) A \right)  \text{ReLU}(S_{\Delta}(\hat{x}_j)) S_{B}  (\hat{x}_j)
\end{equation}

Consider the following query/key/value notation:
\begin{equation} \label{eq:snipleAttnNotations}
    \tilde{Q_i} := S_{C} (\hat{x}_i),  \textbf{ }
    \tilde{K_j} :=  \text{ReLU}(S_{\Delta}(\hat{x}_j)S_{B}  (\hat{x}_j),  \textbf{ } 
    \tilde{H}_{i,j} := \exp \Big{(} \sum_{\substack{k=j+1 \\ S_{\Delta}(\hat{x}_k) > 0}}^i S_{\Delta}(\hat{x}_k) \Big{)} A
\end{equation}
Eq.~\ref{eq:attnFinal2Simplified} can be further simplified to:
\begin{equation} \label{eq:snipleAttn}
    \tilde{\alpha}_{i,j} \approx \tilde{Q_i} \tilde{H}_{i,j}  \tilde{K_j} 
\end{equation}

This formulation enhances our understanding of the Mamba's attention mechanism. Whereas traditional self-attention captures the influence of \(x_j\) on \(x_i\) through the dot products between \(Q_i\) and \(K_j\), Mamba's approach correlates this influence with \(\tilde{Q}_i\) and \(\tilde{K}_j\), respectively. Additionally, \(\tilde{H}_{i,j}\) controls the significance of the recent \(i-j\) tokens, encapsulating the continuous aggregated historical context spanning from \(x_j\) to \(x_i\). 

This distinction between self-attention and Mamba, captured by \(\tilde{H}_{i,j}\) could be a key factor in enabling Mamba models to understand and utilize \underline{continuous} historical context within sequences more efficiently than attention.

Furthermore, Eq.~\ref{eq:snipleAttn}, and~\ref{eq:snipleAttnNotations} offer further insights into the characterization of the hidden attention matrices by demonstrating that the only terms modified across channels are $A$ and $\Delta_i$, which influence the values of \(\tilde{H}_{i,j}\) and \(\tilde{K}_j\) through the discretization rule in Eq.~\ref{eq:discretization}. Hence, all the hidden attention matrices follow a \underline{common pattern}, distinguished by the keys \(\tilde{K}_j\) via $\Delta_i$ and the significance of the history \(\tilde{H}_{i,j}\) via $A$ and $\Delta_i$.

A distinct divergence between Mamba's attention mechanism and traditional self-attention lies in the latter's utilization of a per-row softmax function. It is essential to recognize that various attention models have either omitted the softmax~\cite{lu2021soft} or substituted it with elementwise neural activations~\cite{hua2022transformer,wortsman2023replacing,ma2022mega,zimerman2023converting}, achieving comparable outcomes to the original framework.

{The softmax operator is known to lead to oversmoothing~\cite{ali2023centered,wang2022anti}. As we show in the Appendix~\ref{app:oversmoothing}, Mamba attention layers have lower inter-token smoothing than Transformer attention.}

\subsection{Application to Attention Rollout\label{sec:AttnMatForXAI}}

{
As our class-agnostic explainability teqchniqe for Mamba models, we built our method on top of the Attention-Rollout~\cite{abnar2020quantifying} method. For simplicity, we assume that we are dealing with a vision mamba model, which operates on sequences of size $L+1$, where $L$ is the sequence length obtained from the $\sqrt{L} \times \sqrt{L}$ image patches, with a classification (CLS) token appended to the end of the sequence.

To do so, for each sample, we first extract the hidden attention matrix $\tilde{\alpha}^{\lambda,d}$ for any channel $d \in [D]$ and layer $\lambda \in [\Lambda]$ according to the formulation in section~\ref{sec:S6asAttn} (Eq.~\ref{eq:MAMbaASmatmul}), such that $\tilde{\alpha}^{\lambda,d} \in \mathbb{R}^{(L+1)\times (L+1)}$ 

Attention-Rollout is then applied as follows:
\begin{equation}\label{eq:rolloutPerchannel}
    \forall \lambda \in [\Lambda]:\quad   \tilde{\mathbb{\alpha}}^{\lambda} = \mathbb{I}_{L+1} + \mathop{\mathbb{E}}_{d \in [D]}(\tilde{\alpha}^{\lambda,d}), \quad 
    {\tilde{\alpha}}^{\lambda} \in \mathbb{R}^{(L+1) \times (L+1)}
\end{equation}
where 
$\mathbb{I}_{L+1} \in \mathbb{R}^{(L+1) \times (L+1)}$ is an identity matrix utilized to incorporate the influence of skip connections along the layers.

Now, the per-layer global attention matrices $\tilde{\mathbb{\alpha}}^{\lambda}$ for all $\lambda \in [\Lambda]$ are aggregated into the final map $\rho$ by: 
\begin{equation}\label{eq:rolloutPerlayer}
    \rho = \Pi_{\lambda=1}^{\Lambda} \tilde{\alpha}^{\lambda}, \quad \rho \in \mathbb{R}^{(L+1) \times (L+1)}  
\end{equation}

Note that each row of $\rho$ corresponds to a relevance map for each token, given the other tokens. In the context of this study, which concentrates on classification models, our attention analysis directs attention exclusively to the CLS token. Thus, we derive the final relevance map from the row associated with the CLS token in the output matrix, denoted by $\rho_{\text{CLS}} \in \mathbb{R}^{L}$, which contains the relevance scores evaluating each token's influence on the classification token. Finally, to obtain the final explanation heatmap we reshape $\rho_{\text{CLS}} \in \mathbb{R}^{L}$ to $\sqrt{L}\times\sqrt{L}$ and upsample it back to the size of the original image using bilinear interpolation. 

Although Mamba models are causal by definition, resulting in causal hidden attention matrices, our method can be extended to a bidirectional setting in a straightforward manner. This adaptation involves modifying Eq.~\ref{eq:rolloutPerchannel} so that $\tilde{\mathbb{\alpha}}^{\lambda,d}$ becomes the outcome of summing the (two) per-direction matrices of the $\lambda$-layer and the $d$-channel. 

}

\subsection{Application to Attention-based Attribution\label{sec:AttrForXAI}}
{As our class-specific explainability technique for Mamba models, we have tailored the Transformer-Attribution~\cite{chefer2021transformer} explainability method, which is specifically designed for transformers, to suit Mamba models. This method relies on a combination of LRP scores and attention gradients to generate the relevance scores. Since each Mamba block includes several peripheral layers that are not included in transformers, such as Conv1D, additional gating mechanisms, and multiple linear projection layers, a robust mechanism must be designed carefully. For simplicity, we focus on vision Mamba, with a grid of $\sqrt{L}$ patches in each row and column, 
as in Sec.~\ref{sec:AttnMatForXAI}.

The Transformer-Attribution method encompasses two stages: (i) generating a relevance map for each attention layer, followed by (ii) the aggregation of these relevance maps across all layers, using the aggregation rule specified in~\ref{eq:rolloutPerlayer}, to produce the final map $\rho$. 

The difference from the attention rollout method therefore lies in how step (i) is applied to each Mamba layer $\lambda \in [ \Lambda] $. For the $\hat{h} \in [H]$ attention head at layer $\lambda$, the transformer method~\cite{chefer2021transformer} computes the following two maps: (1) LRP \cite{bach2015pixel} relevance scores map $R^{\lambda,\hat{h}}$, and (2) the gradients $\nabla {\tilde{\alpha}}^{\lambda,\hat{h}}$ with respect to a target class of interest. Then, these two are fused by a Hadamard product: 
\begin{equation}\label{eq:transformerAttOneLayer}
    \beta^{\lambda} = \mathbb{I}_L +\mathop{\mathbb{E}}_{\hat{h} \in [\hat{H}]}(\nabla {\alpha}^{\lambda,\hat{h}} \odot R^{\lambda,\hat{h}})^+, \quad \mathbb{I}_{L+1} \in \mathbb{R}^{(L+1) \times (L+1)}
\end{equation}

\begin{figure*}[t]
 \centering
  \begin{minipage}[c]{0.549\linewidth}
\includegraphics[width=0.9945\textwidth]{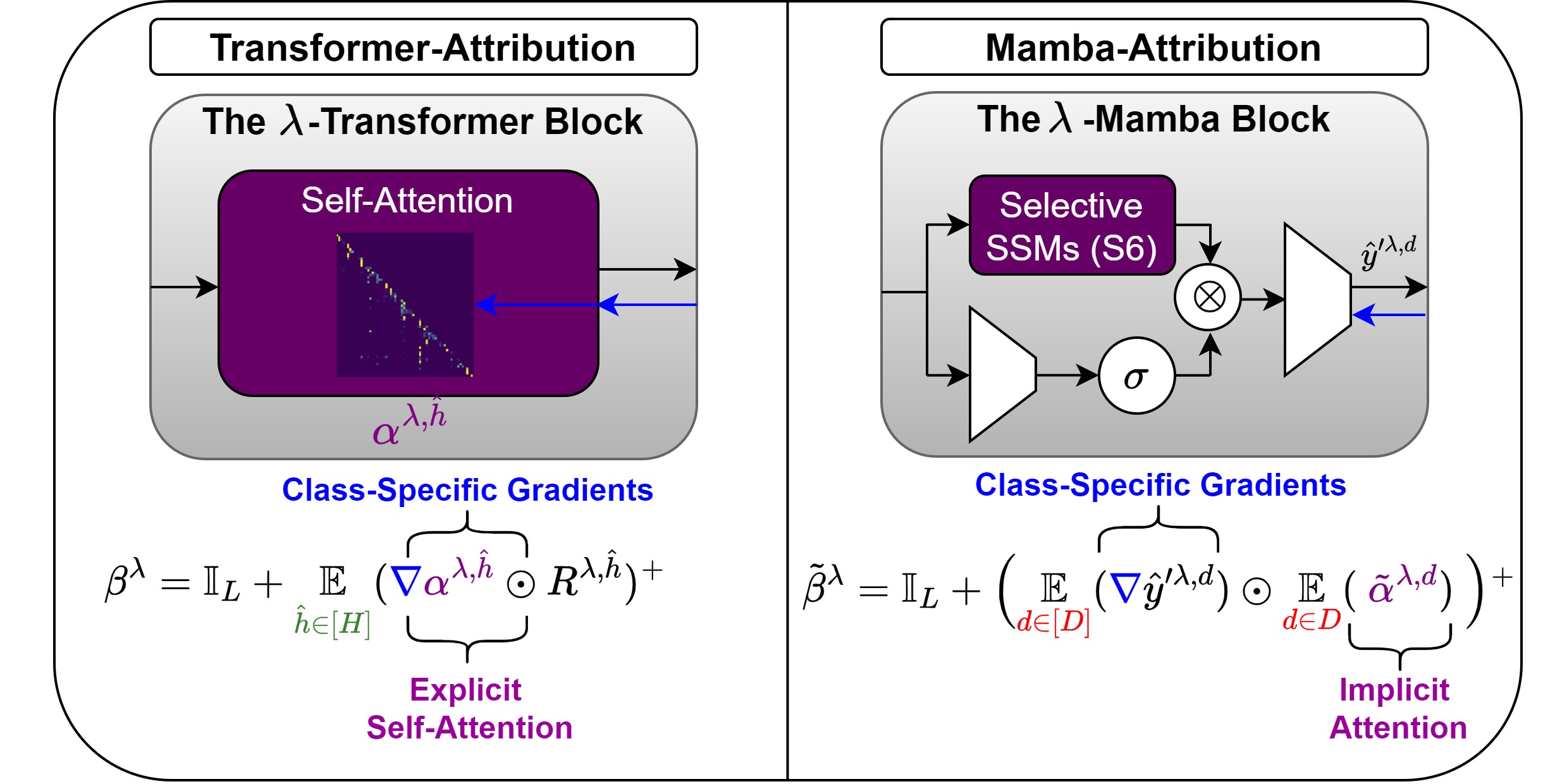}
\caption{Comperative Visualization of Transformer-Attribution and our Mamba-Attribution, both class specific methods.}
\label{fig:MambaAttribution}
 \end{minipage}%
\hfill
\begin{minipage}[c]{0.39\linewidth}

\includegraphics[width=0.99945\textwidth]{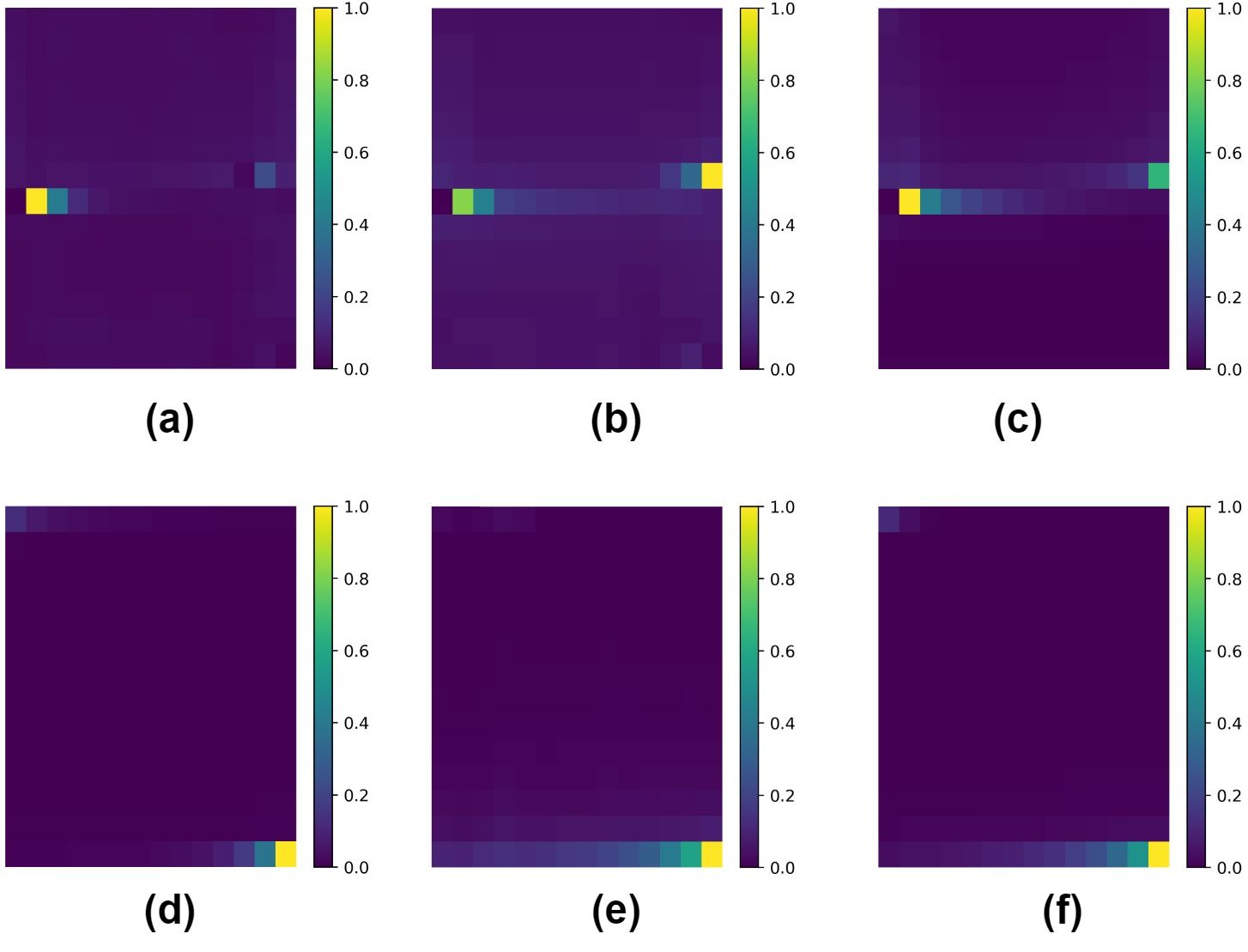}
\vspace{-.6cm}
\caption{Average attention maps for CLS token in the middle (a,b,c) and as the first (d,e,f).}
\label{fig:CLStoken}
\end{minipage}
\end{figure*}

Our method, \textbf{Mamba-Attribution}, depicted in Fig.~\ref{fig:MambaAttribution}, deviates from this method by modifying Eq.~\ref{eq:transformerAttOneLayer} in the following aspects: (i) Instead of computing the gradients on the per-head attention matrices $\nabla {\alpha}^{\lambda,\hat{h}}$, we compute the gradients of $\nabla \hat{y}'^{\lambda,d}$. The motivation for these modifications is to exploit the gradients of both the S6 mixer and the gating mechanism in Eq.~\ref{eq:output_block} (left), to obtain strong class-specific maps. (ii) We simply replace $R^{\lambda,\hat{h}}$ with the attention matrices $\tilde{\alpha}^{\lambda,d}$ at layer $\lambda$ and channel $d$, since we empirically observe that those attention matrices produce better relevance maps. Both of these modifications are manifested by the following form, which defines our method:

\begin{equation}\label{eq:transformerAttOneLayerVB}
    \tilde{\beta}^{\lambda} = \mathbb{I}_L + \Big{(}\mathop{\mathbb{E}}_{d \in D}(\nabla\hat{y}'^{\lambda,d}) \odot \mathop{\mathbb{E}}_{d \in D} (\tilde{\alpha}^{\lambda,d})\Big{)}{}^+
\end{equation}
}

\section{Experiments}
In this section, we present an in-depth analysis of the hidden attention mechanism embedded within Mamba models, focusing on its semantic diversity and applicability in explainable AI frameworks. %
We start by visualizing the hidden attention matrices for both NLP and vision models in Sec.~\ref{subsec:attnMatVis}, followed by assessing our explainable AI techniques empirically, via perturbation and segmentation tests in Sec.~\ref{sec:metrics}.

\subsection{Visualization of Attention Matrices}\label{subsec:attnMatVis}

The Visual Mamba (ViM) comes in two versions: in one, the CLS token is last 
and in the other, the CLS token is placed in the middle. Fig.~\ref{fig:CLStoken} shows how this positioning influences the impact of the patches on the CLS, 
by averaging over the entire test set. Evidently, the patches near the CLS token are more influential. This phenomenon may suggest that a better strategy is to have a non-spatial/global CLS token~\cite{farooq2021global,hatamizadeh2023global}.


Fig.~\ref{fig:hiddenAttn} 
compares the attention matrices in Mamba and Transformer on both vision and NLP tasks. For clearer visualization, we apply the Softmax function to each row of the attention matrices obtained from transformers and perform min-max normalization on the absolute values of the Mamba matrices.  In all cases, we limit our focus to the first 64 tokens. In vision, we compare Vision-Mamba (ViM) and ViT (DeiT), for models of a tiny size, trained on ImageNet-1K. The attention maps are extracted using examples from the test set. Each Mamba attention matrix is obtained by combining the two maps of the bidirectional channel. In NLP, we compare attention matrices extracted from Mamba (130m) and Transformer (Pythia-160m~\cite{biderman2023pythia}) language models, trained on the Pile~\cite{gao2020pile} dataset for next token prediction. The attention maps are extracted using examples from the Lambada dataset (preprocessed by OpenAI). 

\begin{figure*}[t]
\centering

\begin{tabular}{cccc} 
  \multicolumn{2}{c}{Vision}   & \multicolumn{2}{c}{NLP}\\
  \cmidrule(lr){1-2}
  \cmidrule(lr){3-4}
  Transformer & Mamba & Transformer & Mamba \\
    \includegraphics[width=0.252\textwidth]{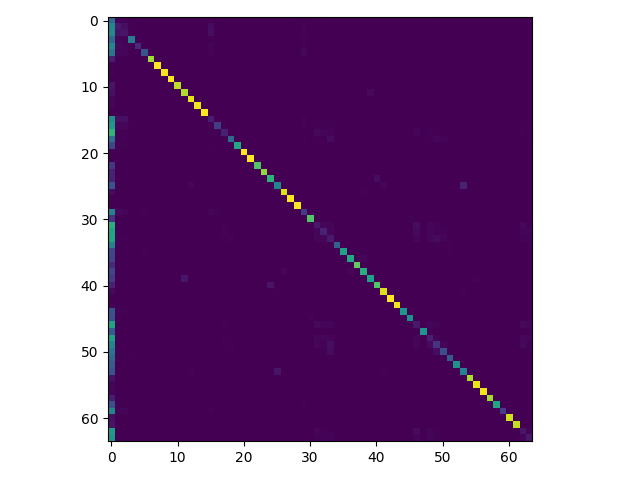} & \includegraphics[width=0.252\textwidth]{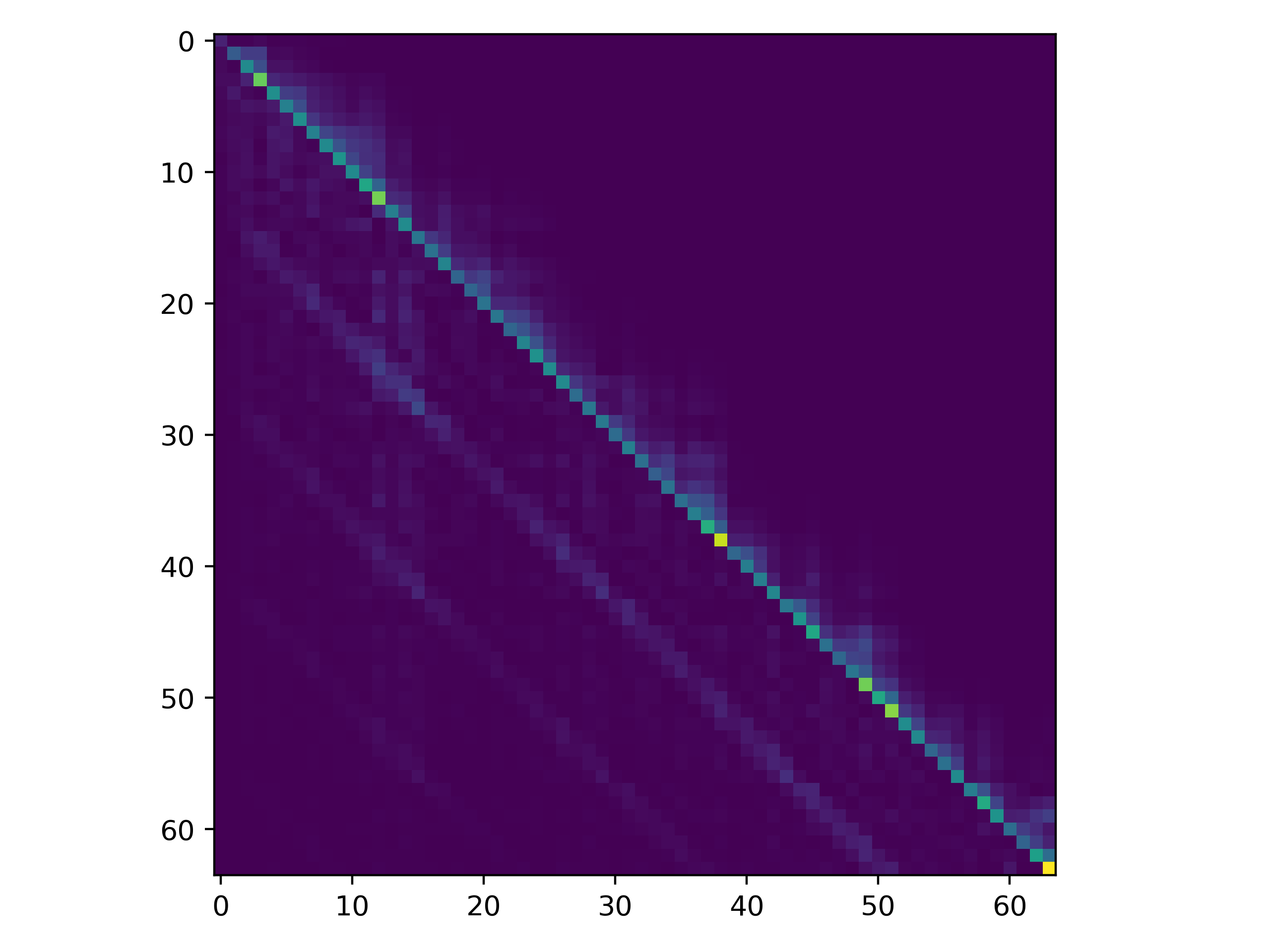} &
    \includegraphics[width=0.252\textwidth]{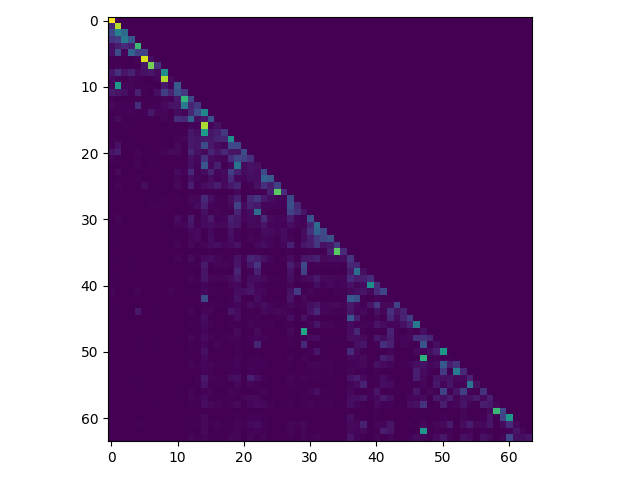} & \includegraphics[width=0.252\textwidth]{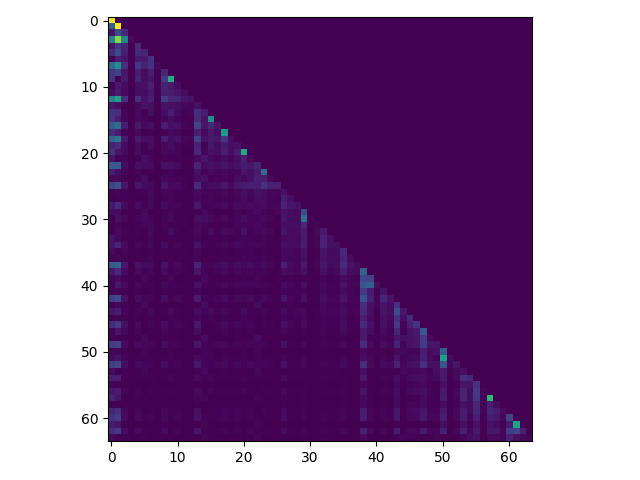}
 \\
  \includegraphics[width=0.252\textwidth]{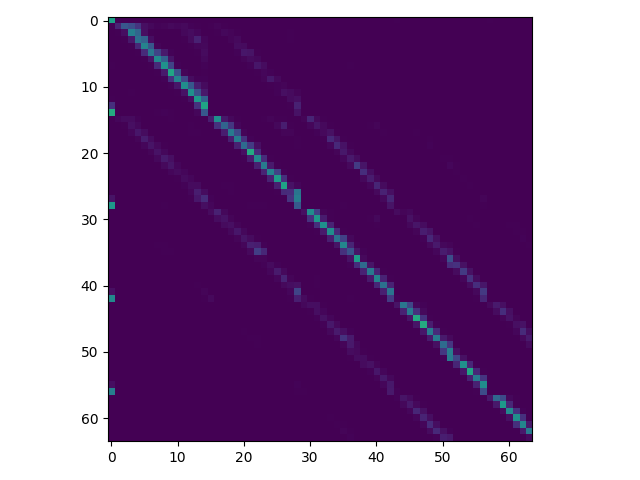} & \includegraphics[width=0.252\textwidth]{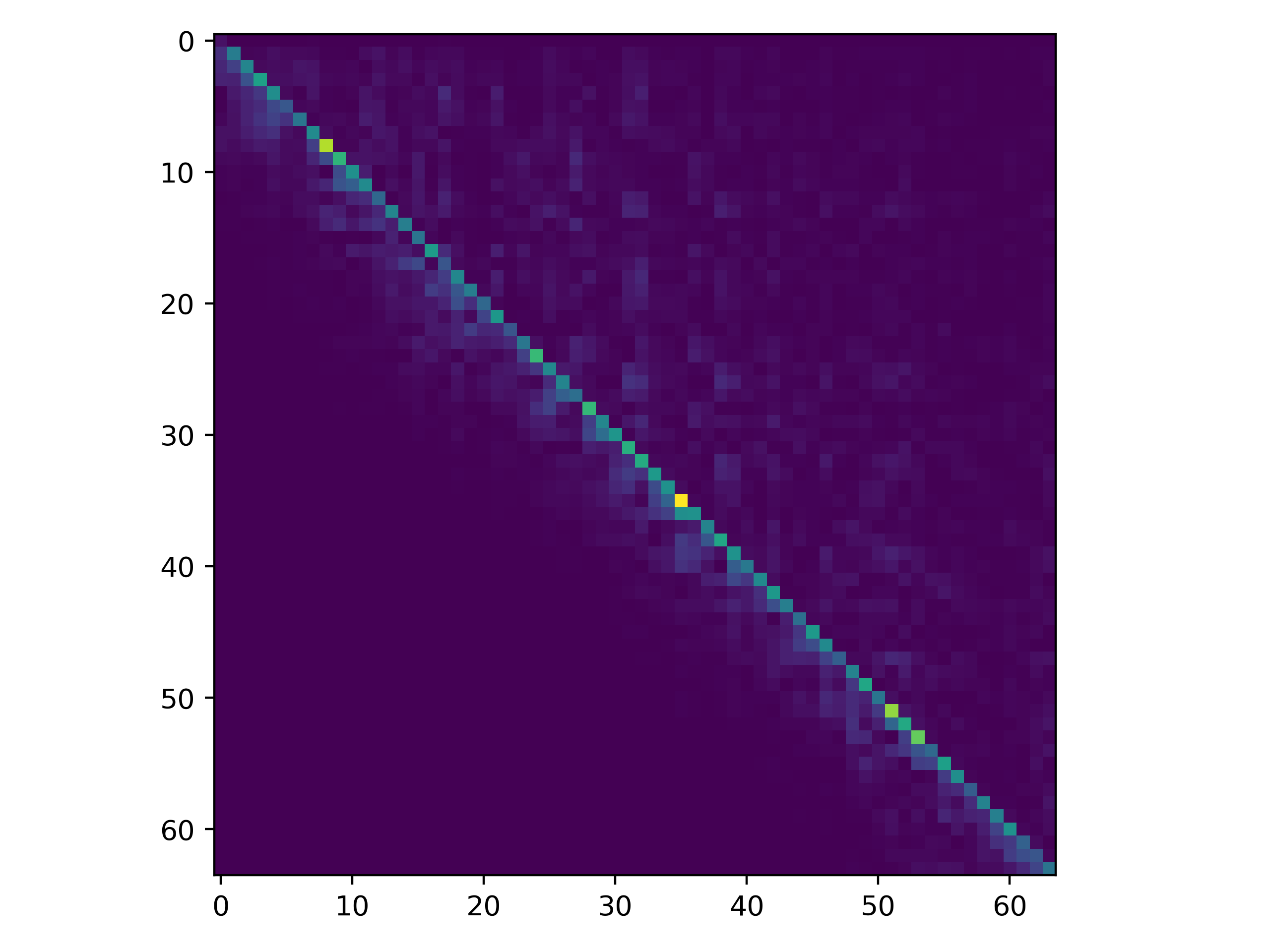} &
    \includegraphics[width=0.252\textwidth]{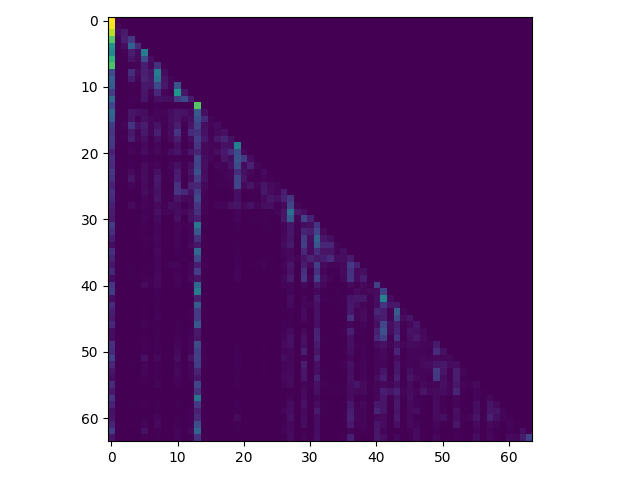} & \includegraphics[width=0.252\textwidth]{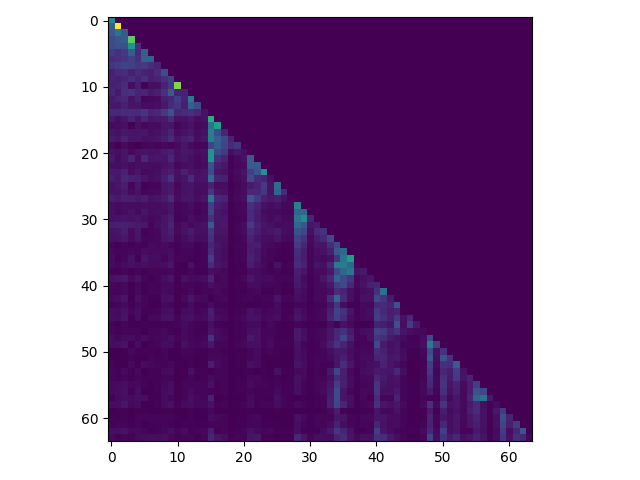}
 \\
  \includegraphics[width=0.252\textwidth]{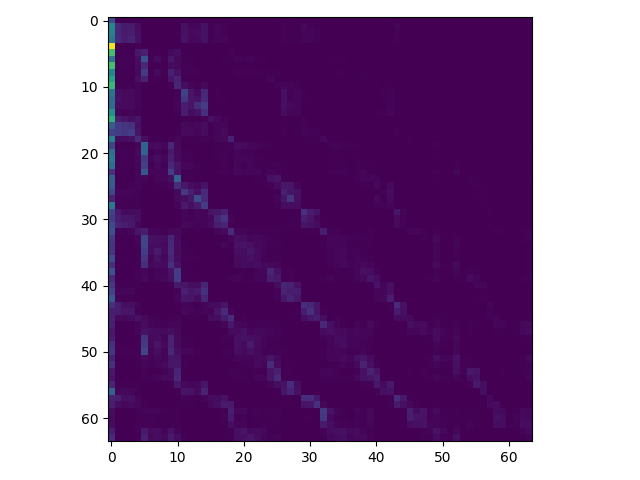} & \includegraphics[width=0.252\textwidth]{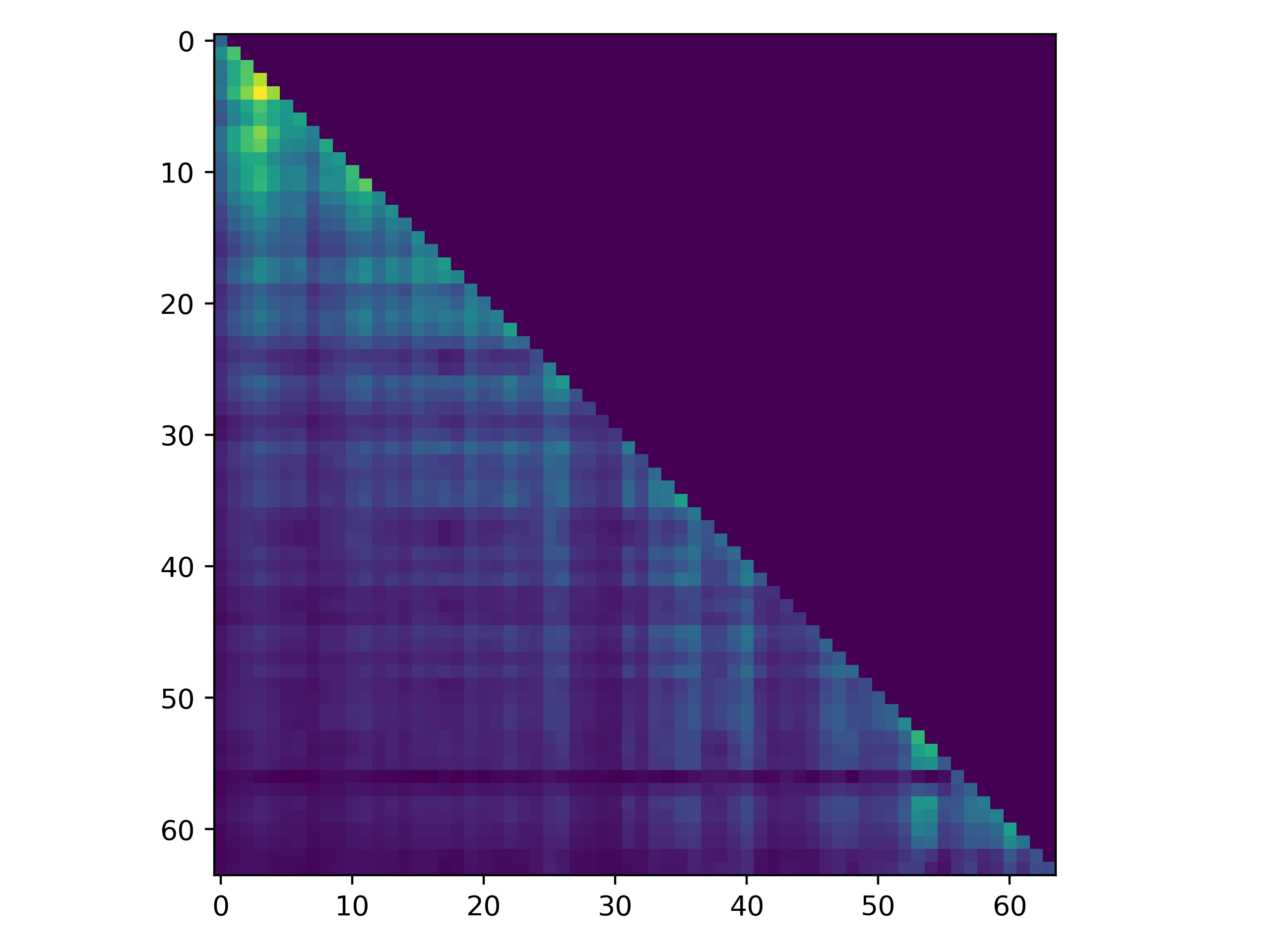} &
    \includegraphics[width=0.252\textwidth]{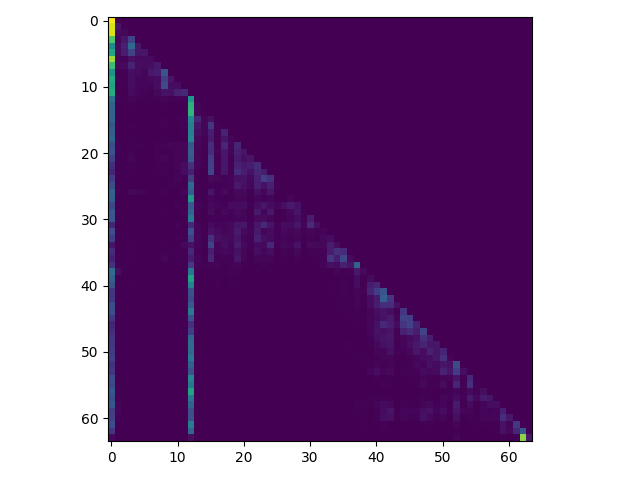} & \includegraphics[width=0.252\textwidth]{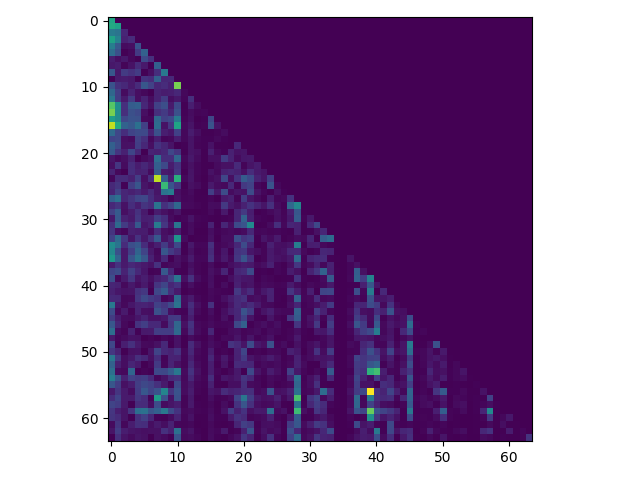}
 \\
 \end{tabular}
 \caption{\textbf{Hidden Attention Matrices:} 
 Attention matrices in vision and NLP Models.Each row represents a different layer within the models, showcasing the evolution of the attention matrices at 25\% (top), 
    50\%, and 75\% (bottom) of the layer depth.}\label{fig:hiddenAttn}
\end{figure*}



As can be seen, the hidden attention matrices of Mamba appear to be similar to the attention matrices extracted from transformers 
In both models, 
the dependencies between distant tokens are captured in the deeper layers of the model, as depicted in the lower rows. 

Some of the attention matrices demonstrate the ability of selective SSM models and transformers to focus on parts of the input. In those cases, instead of the diagonal patterns, some columns seem to miss the diagonal element and the attention is more diffused (recall that we normalized the Mamba attention maps for visualization purposes. In practice, these columns have little activity).

 
Evidently, both the Mamba attention matrices and the transformer attention matrices possess similar properties and depict the two-dimensional structure within the data as bands with an offset of $\sqrt{L}$.

\begin{figure*}[t]
\centering
\begin{tabular}{ccccccc}
\includegraphics[width=0.14\linewidth, height=2cm]{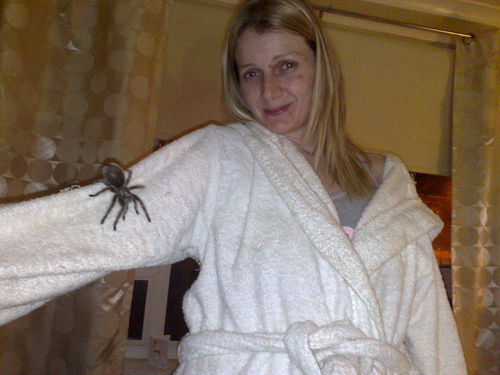} &
\includegraphics[width=0.14\linewidth, height=2cm]{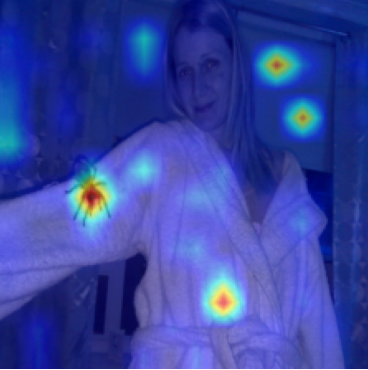} &
\includegraphics[width=0.14\linewidth, height=2cm]{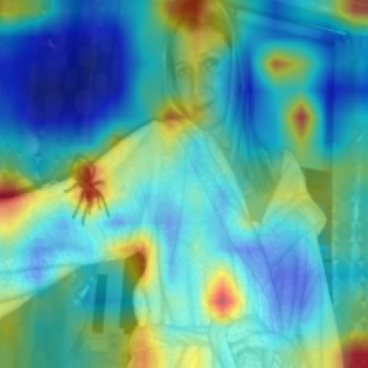} &
\includegraphics[width=0.14\linewidth, height=2cm]{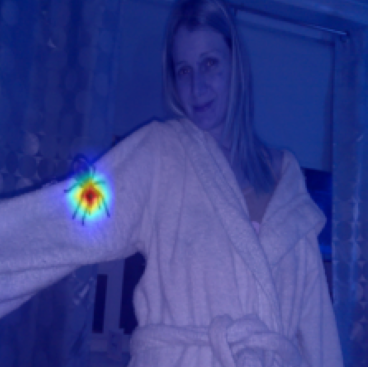} &
\includegraphics[width=0.14\linewidth, height=2cm]{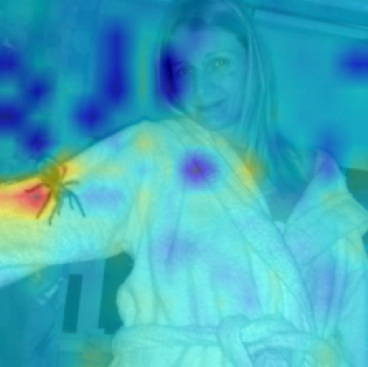} &
\includegraphics[width=0.14\linewidth, height=2cm]{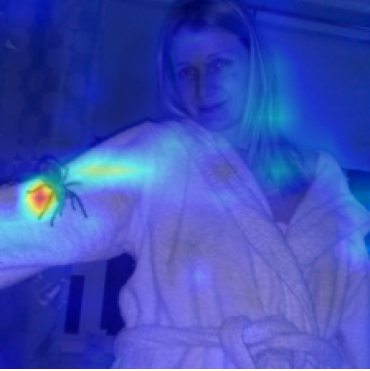} &
\includegraphics[width=0.14\linewidth, height=2cm]{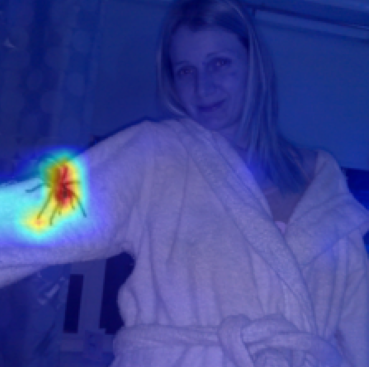} \\
\includegraphics[width=0.14\linewidth, height=2cm]{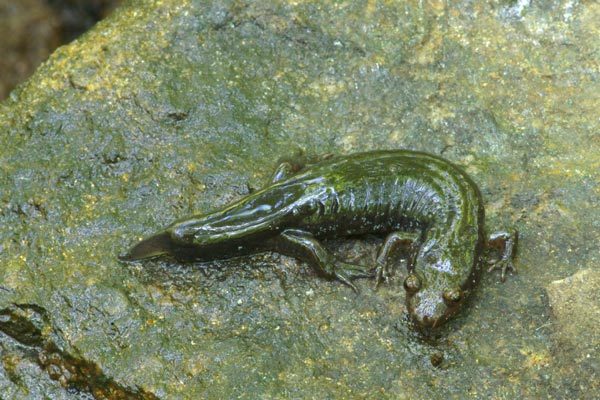} &
\includegraphics[width=0.14\linewidth, height=2cm]{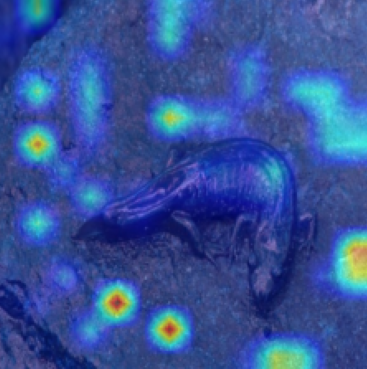} &
\includegraphics[width=0.14\linewidth, height=2cm]{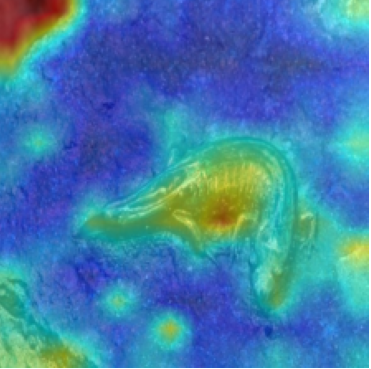} &
\includegraphics[width=0.14\linewidth, height=2cm]{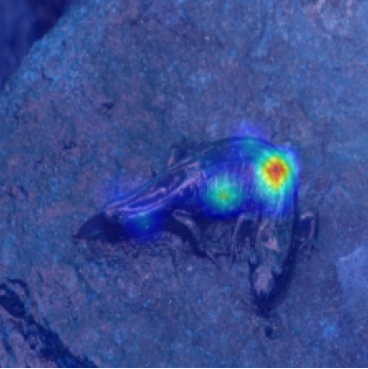} &
\includegraphics[width=0.14\linewidth, height=2cm]{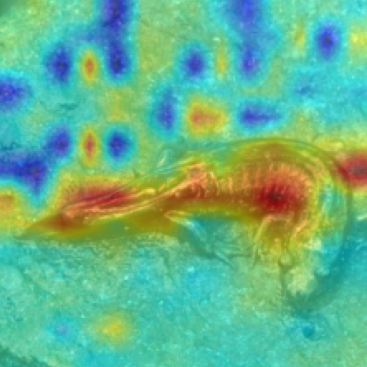} &
\includegraphics[width=0.14\linewidth, height=2cm]{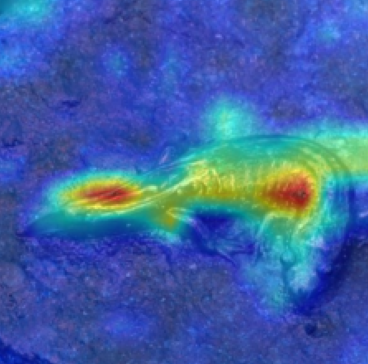} &
\includegraphics[width=0.14\linewidth, height=2cm]{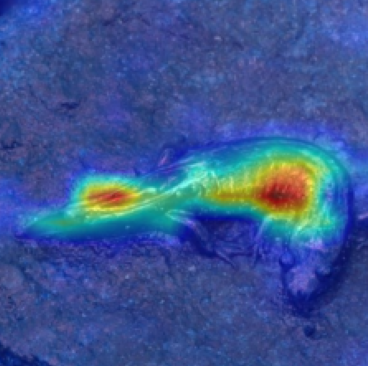} \\
\includegraphics[width=0.14\linewidth, height=2cm]{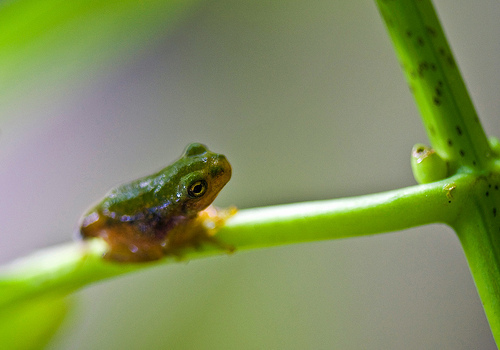} &
\includegraphics[width=0.14\linewidth, height=2cm]{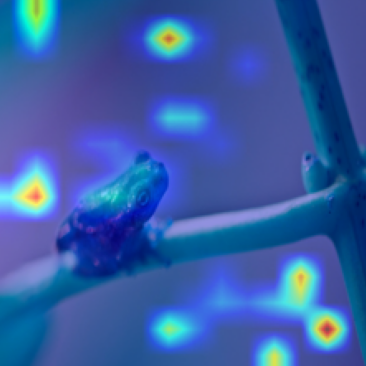} &
\includegraphics[width=0.14\linewidth, height=2cm]{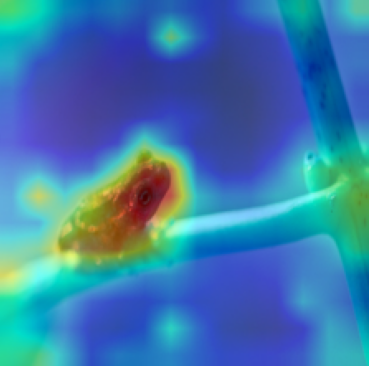} &
\includegraphics[width=0.14\linewidth, height=2cm]{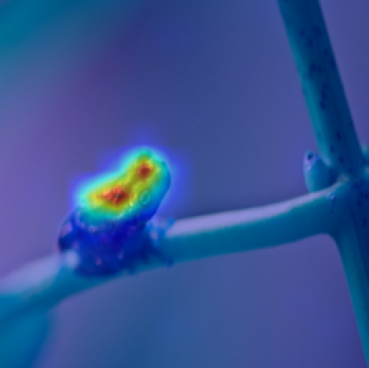} &
\includegraphics[width=0.14\linewidth, height=2cm]{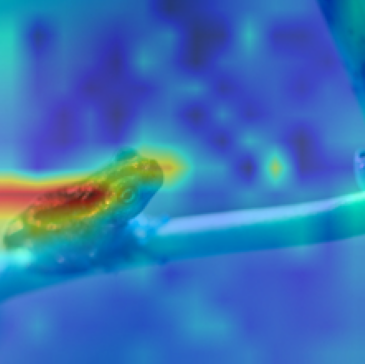} &
\includegraphics[width=0.14\linewidth, height=2cm]{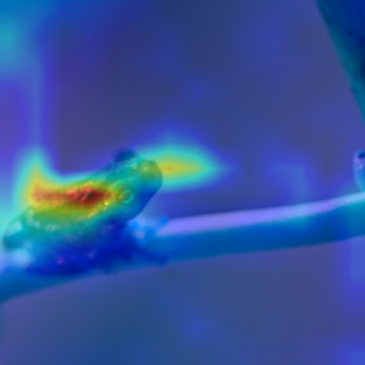} &
\includegraphics[width=0.14\linewidth, height=2cm]{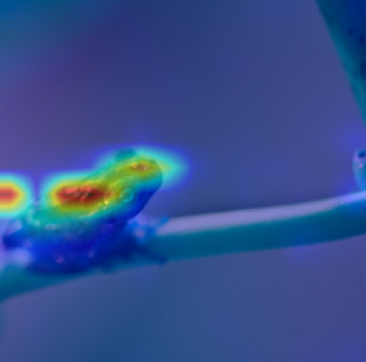} \\
\includegraphics[width=0.14\linewidth, height=2cm]{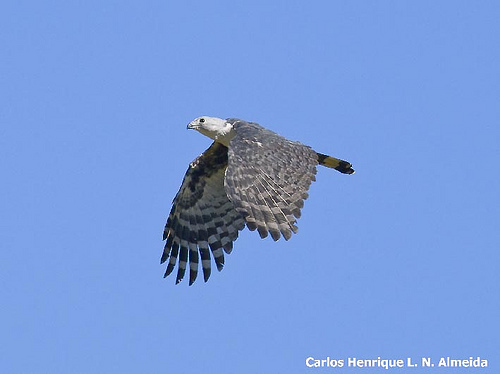} &
\includegraphics[width=0.14\linewidth, height=2cm]{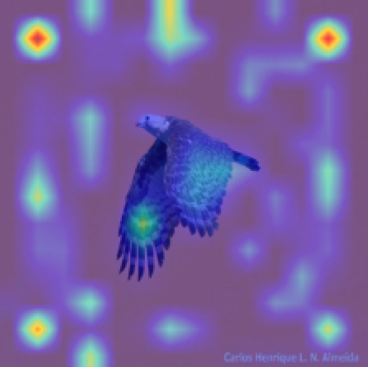} &
\includegraphics[width=0.14\linewidth, height=2cm]{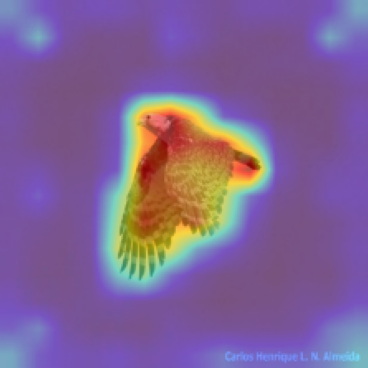} &
\includegraphics[width=0.14\linewidth, height=2cm]{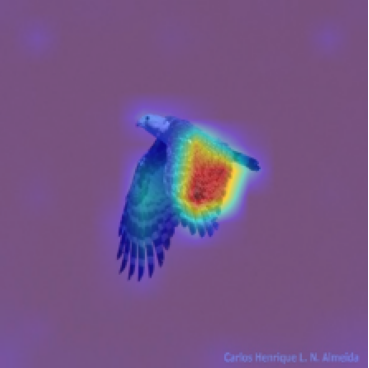} &
\includegraphics[width=0.14\linewidth, height=2cm]{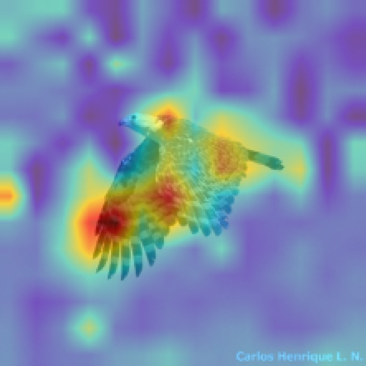} &
\includegraphics[width=0.14\linewidth, height=2cm]{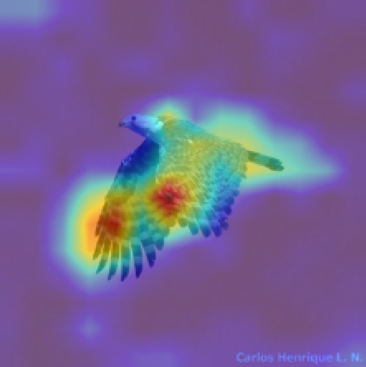} &
\includegraphics[width=0.14\linewidth, height=2cm]{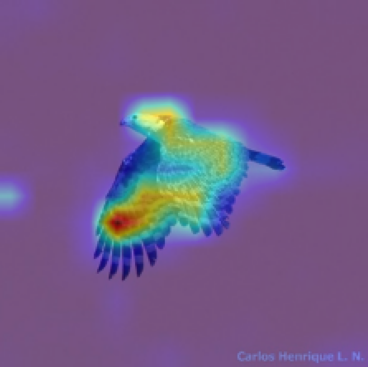} \\
(a) & (b) & (c) & (d) & (e) & (f) & (g) \\
\end{tabular}

  \caption{Qualitative results for the different explanation methods for the ViT-small and the Mamba-small models. (a) the original image, (b) the aggregated Raw-Attention of ViT-Small, (c) Attention Rollout for ViT-Small, (d) Transformer-Attribution for ViT-Small, (e) the Raw-Attention of Mamba-Small, (f) Attention-Rollout of Mamba-Small and (g) the Mamba-Attribution method for the Mamba-Small model.}
  \label{fig:qualitative_cmp}
\end{figure*}

\subsection{Explainability Metrics}
\label{sec:metrics}
The explainable AI experiments include three types of explainability methods: (1) Raw-Attention, which employs raw attention scores as relevancies. Our findings indicate that averaging the attention maps across layers yields optimal results. (2) Attn-Rollout~\cite{abnar2020quantifying} for Transformers, and its Mamba version, as depicted in Sec.~\ref{sec:AttnMatForXAI}. Finally, (3) The Transformer Attribution of Chefer et al.~\cite{chefer2021generic} and its Mamba Attribution counterpart, detailed in Sec.~\ref{sec:AttrForXAI}.

Fig.~\ref{fig:qualitative_cmp} depicts the results of the six attribution methods on typical samples from the ImageNet test set. As can be seen, the Mamba-based heatmaps are often more complete than their transformer-based counterparts. The raw attention of Mamba stands out from the other five heatmaps, since it depicts activity across the entire image. However, the relevant object is highlighted.

Next, we apply explainability evaluation metrics. These metrics allow one to compare different explainability methods that are applied to the same model. Applying them to compare different models is not meant to say that model X is more explainable than model Y. The main purpose is to show that the attention maps of Mamba we introduce are as useful as the attention maps of Transformers in terms of providing explainability. A secondary aim is to validate the feasibility of potential use for weakly supervised downstream tasks that require spatial location.
Perturbation

\smallskip
\noindent{\bf Perturbation Tests\quad}
In this evaluation framework, we employ an input perturbation scheme to assess the efficacy of various explanation methods, following the approach outlined by ~\cite{chefer2021transformer,chefer2021generic}. 

These experiments are conducted under two distinct settings. 
In the positive perturbation scenario, a quality explanation involves an ordered list of pixels, arranged most-to-least relevant. Consequently, when gradually masking out the pixels of the input image, starting from the highest relevance to the lowest, and measuring the mean top-1 accuracy of the network, one anticipates a notable decrease in performance. 

Conversely, in the negative perturbation setup, a robust explanation is expected to uphold the accuracy of the model while systematically removing pixels, starting from the lowest relevance to the highest. 

In both cases, the evaluation metrics consider the area-under-curve (AUC) focusing on the erasure of 10\% to 90\% of the pixels. 

{The results of the perturbations are presented in Tab.~\ref{tab:perturbations_transposed}, depicting the performance of different explanation methods under both positive and negative perturbation scenarios across the two models. In the positive perturbation scenario, where lower AUC values are indicative of better performance, we notice that for Raw-Attention, Mamba shows a better AUC  compared to the Vision Transformer (ViT). For the Attn-Rollout method, Mamba outperforms the ViT, while the latter shows a better AUC under the Attribution method. In the negative perturbation scenario, where higher AUC values are better, the Transformer-based methods consistently outperform Mammba across all three methods. The tendency for lower AUC in both positive (where it is desirable) and negative perturbation (where it is undesirable) may indicate that the Mamba model is more sensitive to blacking out patches, and it would be interesting to add experiments in which the patches are blurred instead~\cite{fong2017interpretable}.}
For perturbation tests in the NLP domain, please refer to the Appendices~\ref{app:nlpexpir} and~\ref{app:nlp_qual}.

\smallskip
\noindent{\bf Segmentation Tests \quad}
It is expected that an effective explainability method would produce reasonable foreground segmentation maps. This is assessed for ImageNet classifiers by comparing the obtained heatmap against the ground truth segmentation maps available in the ImageNet-Segmentation dataset~\cite{guillaumin2014ImageNet}. 

Evaluation is conducted based on pixel accuracy, mean-intersection-over-union (mIoU) and mean average precision (mAP) metrics, aligning with established benchmarks in the literature for explainability~\cite{chefer2021generic,chefer2021transformer,nam2020relative,gur2021visualization}.

The results are outlined in Tab.~\ref{tab:segmentation_transposed}. For Raw-Attention, Mamba demonstrates significantly higher pixel accuracy and mean Intersection over Union compared to Vision Transformer, while the latter performs better in mean Average Precision. Under the Attn-Rollout method, Mamba outperforms Vision Transformer in mean Average Precision, pixel accuracy and mean Intersection over Union. Finally, Transformer-Attribution consistently surpasses Mamba-Attribution, achieving the highest scores in pixel accuracy, mean Average Precision, and mean Intersection over Union, respectively. 

These results underscore the potential of Mamba's attention mechanism as approaching and sometimes surpassing the interoperability level of Transformer models, especially when the attention maps are taken as is. It also highlights the applicability of Mamba models for downstream tasks such as weakly supervised segmentation. It seems, however, that the Mamba-based attribution model, which is modeled closely after the transformer method of Chefer et al.~\cite{chefer2021transformer} may benefit from further adjustments.

\begin{table*}[t]
    \caption{Positive and Negative perturbation AUC results (percentages) for the predicted class 
    on the ImageNet validation set. For positive perturbation lower is better, and for negative perturbation higher is better.}
    \label{tab:perturbations_transposed}
    \centering
    \small
    \begin{tabular*}{\linewidth}{@{\extracolsep{\fill}}lcccc}
        \toprule
        & \multicolumn{2}{c}{Positive Perturbation} & \multicolumn{2}{c}{Negative Perturbation} \\
        \cmidrule{2-3} \cmidrule{4-5}
        & Mamba & Transformer & Mamba & Transformer \\
        \midrule
        Raw-Attention & 17.268 & 20.687 & 34.025 & 40.766 \\
        Attn-Rollout & 18.806 & 20.594 & 41.864 & 43.525 \\
        Attribution & 16.619 & \textbf{15.351} & 39.632 & \textbf{48.089} \\
        \midrule
    \end{tabular*}
\smallskip
\caption{Segmentation performance on the ImageNet-Segmentation~\cite{guillaumin2014ImageNet} dataset (percent). Higher is better. The upper part depicts the results for Vision Mamba-small while the lower part contains the results for Vision Transformer-Small}
\label{tab:segmentation_transposed}
\centering
\begin{tabular*}{\linewidth}{@{\extracolsep{\fill}}llccc}
\toprule
\small
Model & Method & pixel accuracy & mAP & mIoU \\
\midrule
{Transformer} & Raw-Attention &59.69  & \textbf{77.25} & 36.94\\
{Mamba} & Raw-Attention & \textbf{67.64} & 74.88 & \textbf{45.09} \\
\midrule
 {Transformer}& Attn-Rollout~\cite{abnar2020quantifying} & 66.84 &  80.34& 47.85 \\
 {Mamba}& Attn-Rollout (Sec.~\ref{sec:AttnMatForXAI}) & \textbf{71.01} & \textbf{80.78} & \textbf{51.51} \\
\midrule
 {Transformer}& Transformer-Attr~\cite{chefer2021transformer} & \textbf{79.26} & \textbf{84.85} & \textbf{60.63} \\
 {Mamba}& Mamba-Attr (Sec.~\ref{sec:AttrForXAI}) & 74.72 & 81.70 & 54.24 \\
\bottomrule
\end{tabular*}
\end{table*}

\section{Discussion: The Evolution of Attention in SSMs}


A natural question to ask is whether the attention perspective we exposed is unique to Selective SSM (the building block of Mamba), separating it from other SSMs. The answer is that Selective SSM, similar to transformers, contains a type of layer we call data-dependent non-diagonal mixer, which previous layers do not. 

In their seminal work, Poli et al.~\cite{poli2023hyena} claim that a crucial aspect of transformers is the existence of an \emph{expressive, data-controlled linear operator}. Here, we focus on a more specific component, which is \emph {an expressive data-controlled linear non-diagonal mixer operator}. This distinguishes between elementwise operators that act on the data associated with specific tokens (including MLP, gating mechanisms, and GLU activations~\cite{shazeer2020glu}) and mixer operations that pool information from multiple tokens. 

The mixer components can further be divided into fixed, e.g., using pooling operators with fixed structure and coefficients, or data-dependent, in which the interactions between tokens are controlled by their input-dependent representations, e.g., self-attention. In appendix~\ref{app:mixing} we prove the following result, which sheds light on the gradual evolution of attention in SSM models.
\begin{theorem}
(i) S4~\cite{gu2021efficiently}, DSS~\cite{dss}, S5~\cite{smith2022simplified} have fixed mixing elements. (ii) GSS~\cite{gss},and Hyena~\cite{poli2023hyena} have fixed mixing elements with diagonal data-control mechanism. (iii) Selective SSM have data-controlled non-diagonal mixers.
\end{theorem}

Transformers are recognized for their superior in-context learning (ICL) capabilities, where the model adapts its function according to the input provided~\cite{brown2020language}. Empirical evidence has demonstrated that Mamba models are the first SSMs to exhibit ICL capabilities on par with those of transformers~\cite{MAMABICL1,MAMABAICL2}. Based on the intuition that the capacity to focus on specific inputs is required for ICL, we hypothesize that the presence of data-controlled non-diagonal mixers in both transformers and Mamba models is crucial for achieving a high level of ICL. 

A question then arises: which model is more expressive, transformers or selective SSM? While previous work has shown that Transformers are more expressive than traditional state-space layers~\cite{zimerman2023long}, we show in Appendix~\ref{app:express} that the situation is reversed for selective SSMs, as follows:
\begin{theorem}
One channel of the selective state-space layer can express all functions that a single transformer head can express. Conversely, a single Transformer layer cannot express all functions that a single selective SSM layer can. 
\end{theorem}

\section{Conclusions}
In this work, we have established a significant link between Mamba and self-attention layers, illustrating that the Mamba layer can be reformulated as an implicit form of causal self-attention mechanism. This links the highly effective Mamba layers directly with the transformer layers. 
 
The parallel perspective plays a crucial role in efficient training and the recurrent perspective is essential for effective causal generation. The attention perspective plays a role in understanding the inner representation of the Mamba model. While ``Attention is not Explanation'' \cite{jain2019attention}, attention layers have been widely used for transformer explainability. By leveraging the obtained attention matrices, we introduce the first (as far as we can ascertain) explainability techniques for Mamba models, for both task-specific and task-agnostic regimes. This contribution equips the research community with novel tools for examining the performance, fairness, robustness, and weaknesses of Mamba models, thereby paving the way for future improvements, and it also enables weakly supervised downstream tasks. 
Looking ahead, we plan to delve into the relationships between Mamba, Self-Attention and other recent layers, such as RWKV~\cite{peng2023rwkv}, Retention~\cite{sun2307retentive} and Hyena~\cite{poli2023hyena}, and develop XAI methods for LLMs relying on these layers and their corresponding vision variants~\cite{fan2023rmt,zimerman2023multi}.


%
%

\section{Acknowledgments}
This work was supported by a grant from the Tel Aviv University Center for AI and Data Science (TAD). This research was also supported by the Ministry of Innovation, Science \& Technology ,Israel (1001576154) and the Michael J. Fox Foundation (MJFF-022407). The contribution of the first author is part of a  PhD thesis research conducted at Tel Aviv University.

\bibliographystyle{splncs04}
\bibliography{main}
\newpage
\appendix
\section{Oversmoothing}
\setcounter{equation}{23}
\label{app:oversmoothing}

Oversmoothing poses a significant challenge for Transformer-based models~\cite{ru2023token,ali2023centered,dovonon2024setting,nguyen2024mitigating,wang2022anti,gong2021vision,ali2023centered}, 
affecting their ability to accurately capture and represent intricate features and relationships. In Transformer-based models, oversmoothing impacts tasks such as natural language processing~\cite{chen-etal-2023-alleviating,kulikov-etal-2022-characterizing} and vision~\cite{ru2023token,ali2023centered,gong2021vision}, where it blurs fine-grained details essential for understanding and localization~\cite{ru2023token,ali2023centered}.
\begin{figure}[]
\centering
\includegraphics[width=1.0\textwidth]{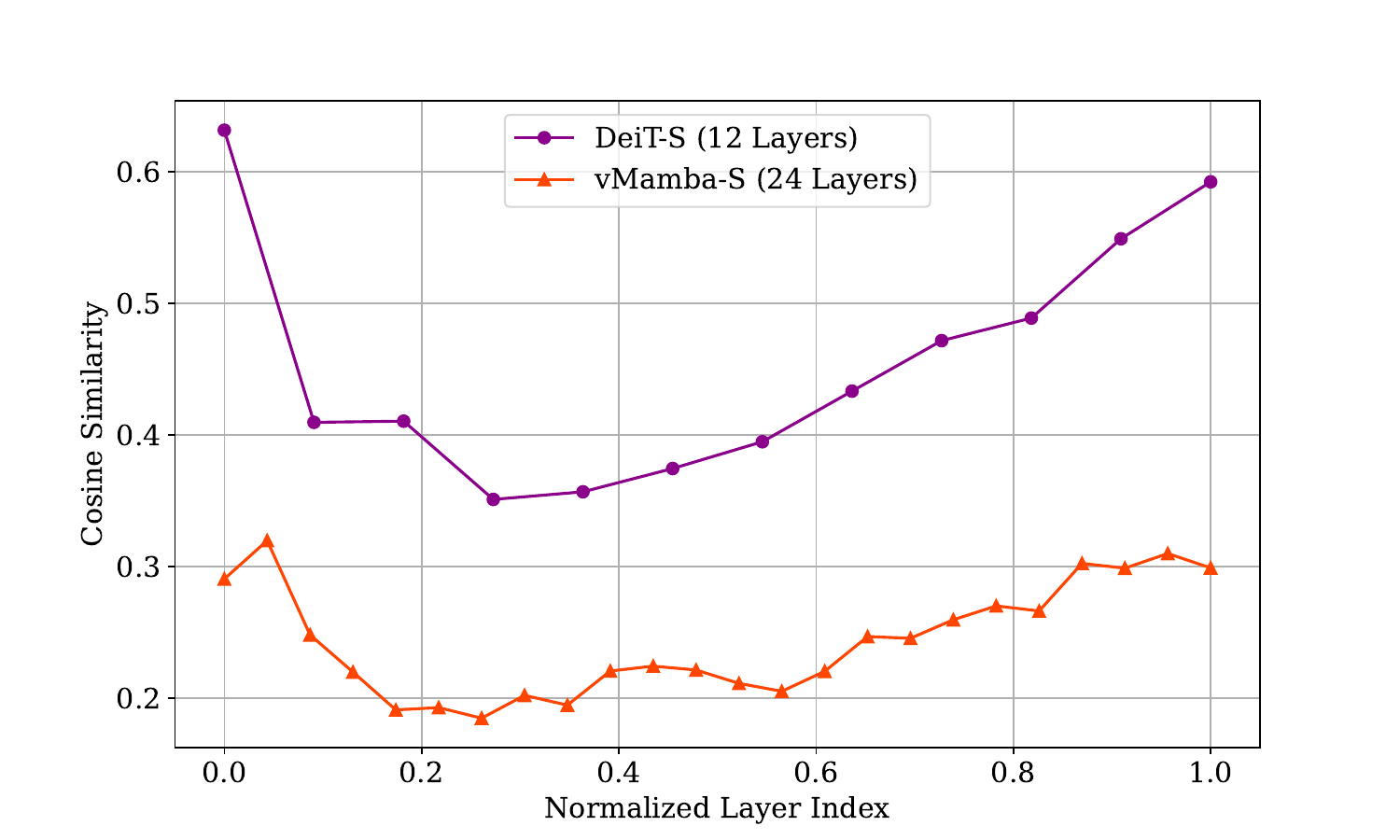}
\caption{The inter cosine-similarity across the tokens for both DeiT and Vision-Mamba small models across the layers, results are averaged across the whole validation set of ImageNet.}
\label{fig:cosine_sim}
\end{figure}

Recent transformer-based contributions~\cite{ali2023centered,wang2022anti} on over-smoothing have developed various tools for measuring it, which we utilize when comparing transformers to Mamba-based architectures. Following Wang et al~\cite{wang2022anti} we employ the cosine similarity metric, which operates on a layer index $l$ and its corresponding feature map $\boldsymbol{X}^l \in \mathbb{R}^{L+1\times d}$, such that $L$ is the sequence length excluding the CLS token and $d$ is the feature dimensionality. The cosine similarity for $\boldsymbol{X}^l$ is then defined as:
\begin{equation}
    \boldsymbol{M}^l_{\text{feat}} = \frac{2}{n(n-1)}\sum_{i = 1}^{n} \sum_{j=i+1}^{n} \frac{\boldsymbol{X}^{(l)T}_{i,:} \boldsymbol{X}^{(l)}_{j,:}}{\lVert\boldsymbol{X}^{(l)}_{i,:}\rVert_2 \lVert\boldsymbol{X}^{(l)}_{j,:}\rVert_2}\,,
\end{equation}
where $\boldsymbol{X}^{(l)}_{i,:}$ represents the $i$-th row (token) of $\boldsymbol{X}^{(l)}$. In words, this measure calculates the average similarity between all token pairs except for self-similarity. The results are averaged across the entire ImageNet validation set. For further details, refer to \cite{wang2022anti}.

Fig.~\ref{fig:cosine_sim} compares $\boldsymbol{M}^l_{\text{feat}}$ between transformers' attention and Mamba attention for different layers, where the layer index is normalized by the total number of layers. Evidently, there is a clear reduction in the inter-cosine similarity among tokens for all layers in the Mamba model (with 24 layers) compared to the Vision Transformer (12 layers). This variation can be attributed to the use of the softmax operator within the self-attention layer, a factor recognized as contributing to the oversmoothing problem~\cite{ali2023centered,wang2022anti}.

\section{NLP Experiments\label{app:nlpexpir}}
\begin{figure}

 \includegraphics[width=\linewidth]{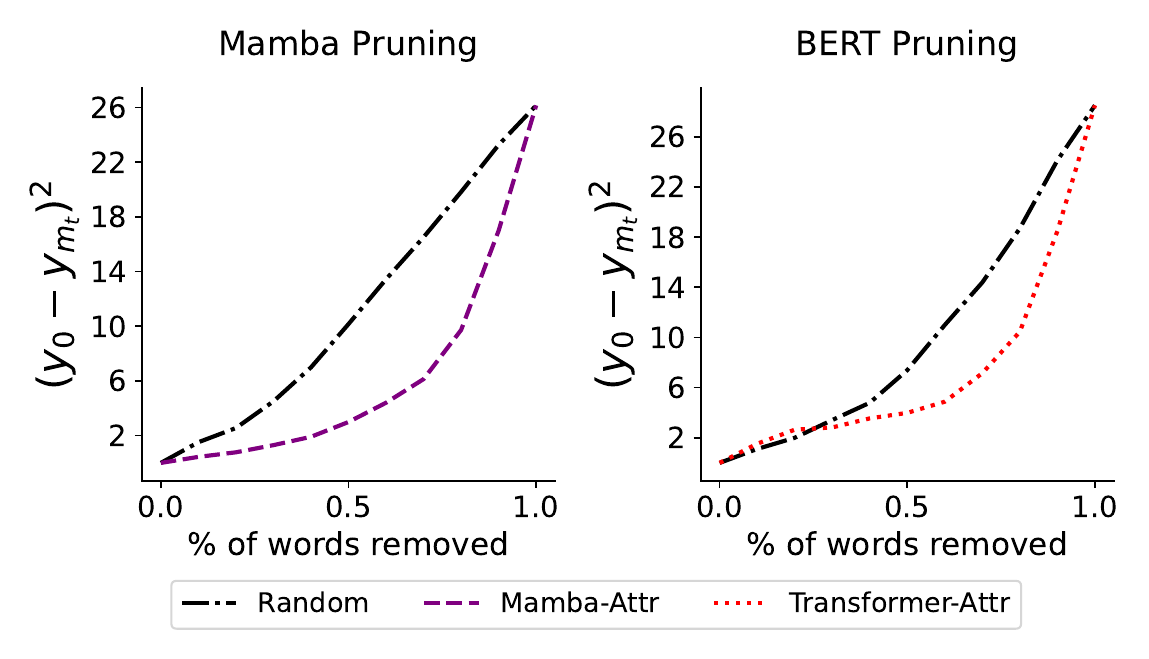} \\
 \includegraphics[width=\linewidth]{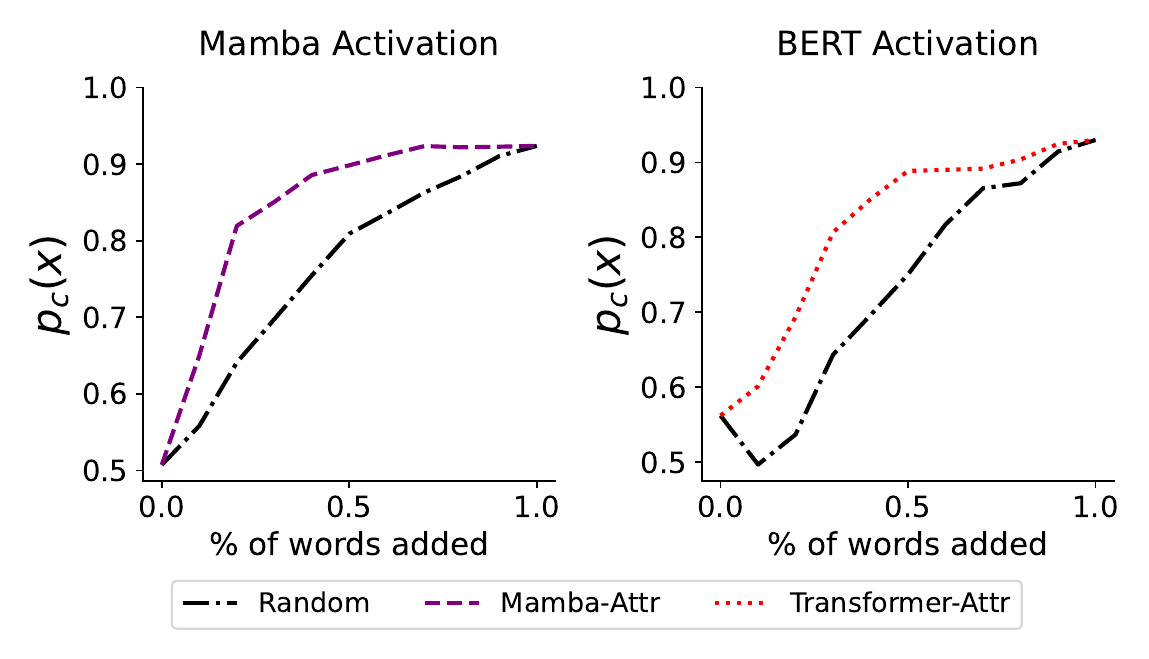} \\
\caption{Evaluation of explanations using input perturbations for the IMDb dataset, top row shows the results for the pruning task in which the words of least absolute relevance are replaced with <UNK> first and the bottom row shows the results for the activation task in which the most relevant words are added first, in both tasks we show the results for Mamba-Attr and Transformer-Attr separately.\label{fig:nlppert}}
\end{figure}
In this experiment, our aim is to extend the utilization of the proposed methods to the domain of Natural Language Processing (NLP). To achieve this, we conduct a comparative analysis between the Mamba-160M model and BERT-large, drawing upon established literature in the field~\cite{chefer2021transformer,ali2022xai}. Two settings are considered : (1) activation task, in this task, a good explanation involves listing tokens in order of their relevance, from most to least. When these tokens are added to an initially empty sentence, they should activate the network output as much and as quickly as possible. We evaluate the quality of explanations by observing the output probability $p_c(x)$ for the ground-truth class $c$. (2) pruning task, the pruning task involves removing tokens from the original sentence, starting with those deemed least relevant and progressing to the most relevant. We assess the impact of this pruning, by measuring the difference between the unpruned model's output logits $y_0$ and $y_{mt}$ of the pruned output.
In the activation task, we begin with a sentence containing "<UNK>" tokens and gradually replace them with the original tokens in order of highest to lowest relevance. Conversely, in the pruning task, we remove tokens from lowest to highest relevance by replacing them with "<UNK>" tokens.

The dataset employed in our study is the IMDb movie review sentiment classification dataset, consisting of 25,000 samples for training and an equal number for testing, with binary labels indicating sentiment polarity.
We utilize the Mamba-130M\footnote{\url{https://huggingface.co/trinhxuankhai/mamba_text_classification}} and BERT\footnote{\url{https://huggingface.co/textattack/bert-base-uncased-imdb}} models fine-tuned on the IMDB dataset for classification. BERT stands out as our baseline choice, benefiting from a readily available implementation of the Transformer-Attr method\footnote{\url{https://github.com/hila-chefer/Transformer-Explainability}}. Notably, both models exhibit comparable accuracy levels on the downstream task of IMDB movie review sentiment classification.
The results, depicted in Fig.~\ref{fig:nlppert}, illustrate that in both the pruning and activation tasks, Mamba-Attr exhibits comparable or occasionally superior performance to the Transformer-Attr method. We present the results of each method in separate graphs, as the two models are not directly comparable due to differences in the logit scale and the behavior on random changes to the prompt. 

In Sec~\ref{app:nlp_qual} we provide qualitative results for the different explanation methods (Mamba-Attr and Transformer-Attr) on the IMDb dataset, for both positive (green) and negative (red) sentiments. Evidently,  Mamba-Attr tends to generate more sparse explanations in comparison to its Transformer-Attr counterpart. For instance, in the analysis of the first negative sample, our method emphasizes the rating of "1" as the most salient feature along with other negative terms. Conversely, the transformer attribution method yields a less sparse explanation, focusing primarily on the relevant word while also encompassing other non-relevant terms. Similarly, in the assessment of the third negative example, our method exhibits a comparable behavior, placing emphasis on the ratings alongside other relevant negative terms. Conversely, while the salient words identified by the transformer attribution method remain valid, its explanation is comparatively less sparse. We observe a similar trend across positive sentiments as well (depicted in green). For instance, in the final positive review, Mamba-Attr distinctly highlights the phrase "Greatest Movie which ever made, " serving as clear evidence of a positive sentiment. In contrast, the explanation provided by Trans-Attr appears more broad and encompassing.

\section{NLP Qualitative Results}
\label{app:nlp_qual}
\begin{tabular}{ll} 
  \centering

  \textbf{Mamba-Attr} & \textbf{Transformer-Attr}

  \\ \\
  
  \includegraphics[width=0.5\textwidth]{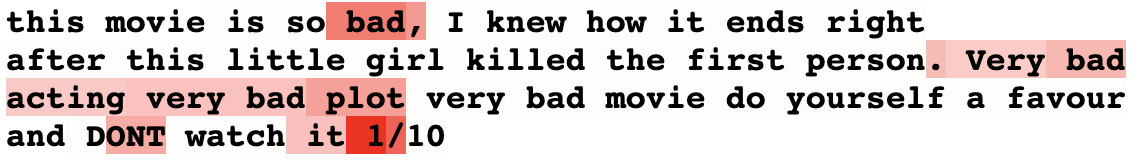} & \includegraphics[width=0.5\textwidth,height=27pt]{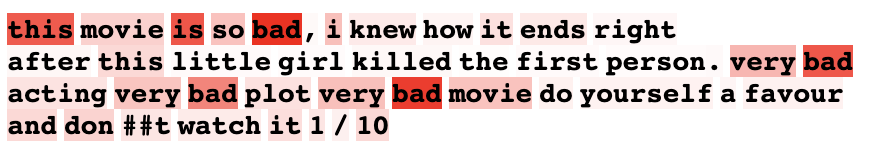}   \\
\\
  \includegraphics[width=0.5\textwidth]{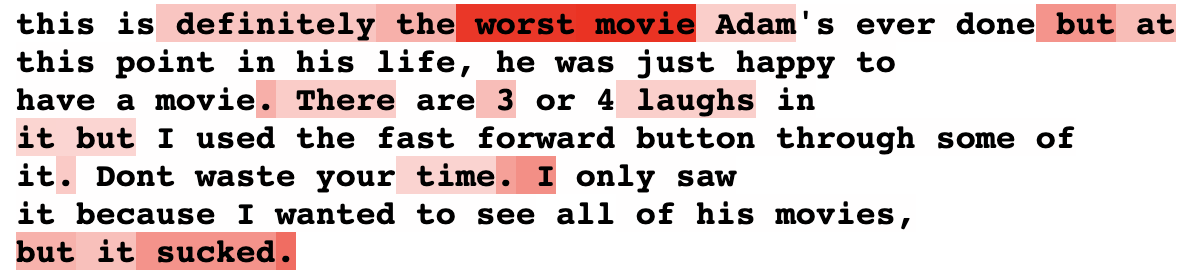} & \includegraphics[width=0.5\textwidth,height=42pt]{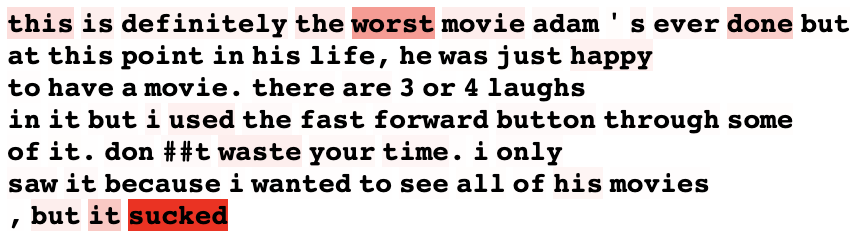}   \\

\\
  \includegraphics[width=0.5\textwidth,height=28pt]{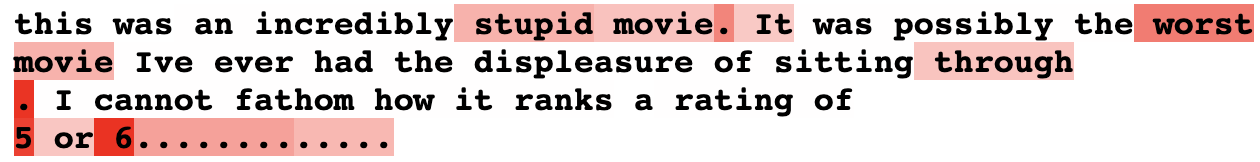} & \includegraphics[width=0.5\textwidth]{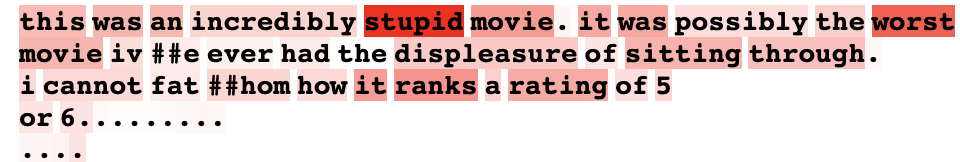}   \\

\\
  \includegraphics[width=0.5\textwidth]{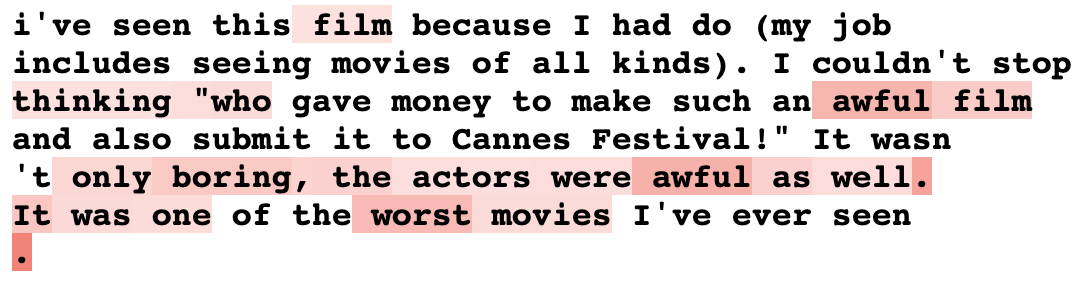} & \includegraphics[width=0.5\textwidth,height=48pt]{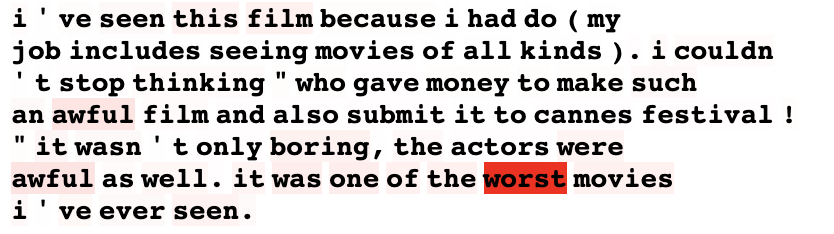}   \\

\\
  \includegraphics[width=0.5\textwidth,height=32pt]{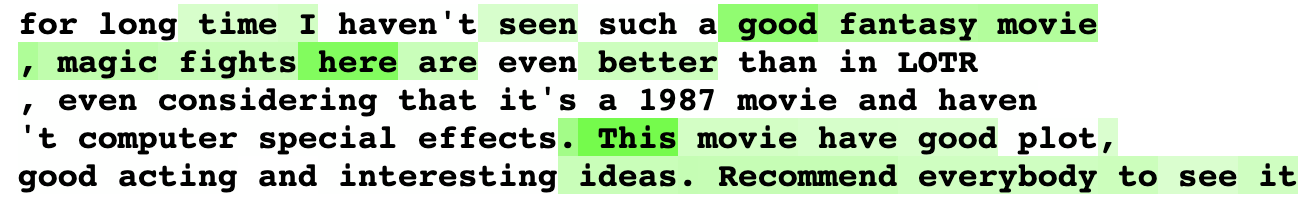} & \includegraphics[width=0.5\textwidth]{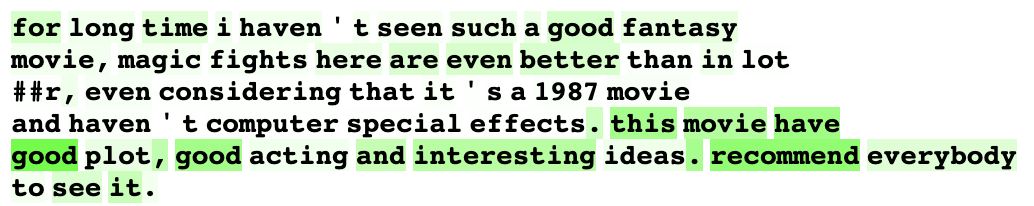}   \\

\\
  \includegraphics[width=0.5\textwidth]{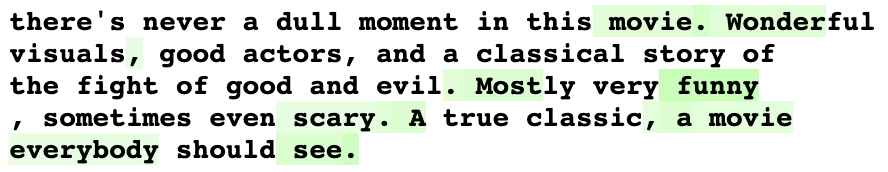} & \includegraphics[width=0.5\textwidth,height=36pt]{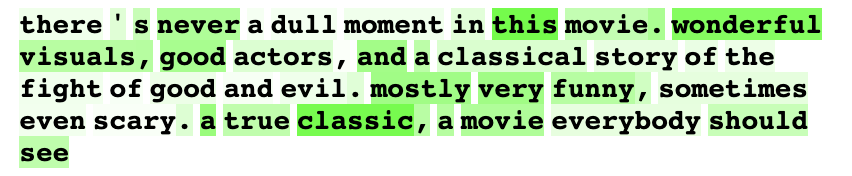}   \\

\\
  \includegraphics[width=0.5\textwidth]{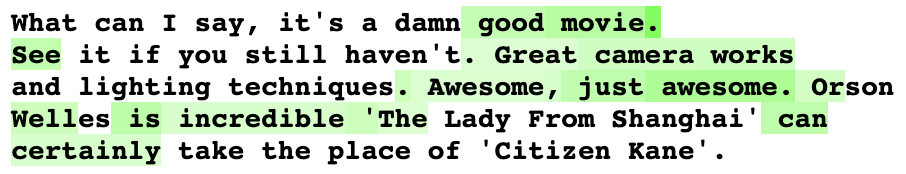} & \includegraphics[width=0.5\textwidth,height=36pt]{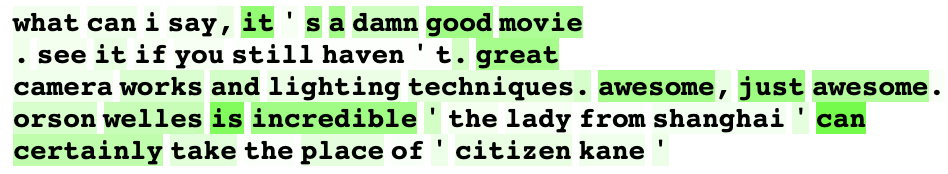}   \\

\\
  \includegraphics[width=0.5\textwidth,height=26pt]{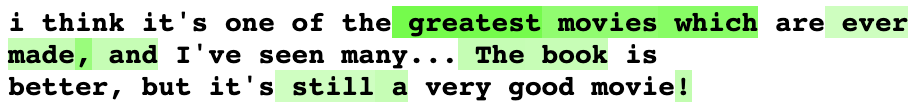} & \includegraphics[width=0.5\textwidth]{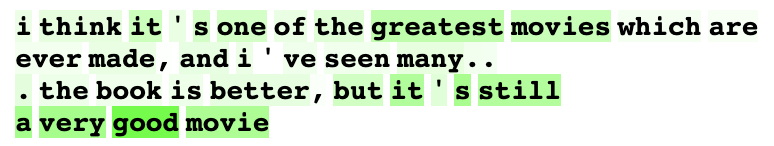}   \\
  
\end{tabular}

\section{Expressiveness of Mamba Models}\label{app:express}
\begin{theorem}
One channel of the selective state-space layer can express all functions that a single transformer head can express. Conversely, a single Transformer layer cannot express all functions that a single selective SSM layer can. 
\end{theorem}
\textbf{Motivation and Intuition:} The motivation for this proof relies on $\tilde{H}_{i,j}$ in Eq. 19, which enables Mamba to utilize continuous historical context within sequences more efficiently than traditional attention mechanisms. To exploit this capability, we focus on a problem involving input-dependent control over the entire input, a task that cannot be captured by relying solely on pairwise interactions at single layer, which constitute the foundation of self-attention.

{\noindent\textbf{Assumptions:}}
\begin{enumerate}
    \item For simplicity, we will disregard the discretization, as it has been shown to be unnecessary in previous work~\cite{gupta2022simplifying}.
    \item As our regime focuses on real elements $(x_i \in \mathbb{R})$, the hidden dimension of the transformer is 1. Hence, the parameters of both the self-attention mechanism and the Mamba are scalars, namely $A_i,B_i,C_i, W^Q,W^V,W^K \in \mathbb{R}$ 
\end{enumerate}

\begin{proof}
    Based on the definition of the count in row function, our proof straightforwardly arises from the following three lemmas:
    \begin{definition}
        The count in row problem: Given a binary sequence $x_1, x_2, \ldots, x_L$ such that $x_i \in \{0, 1\}$ for all $i \leq L$, the "count in row" function $f$ is defined to produce an output sequence $y_1, y_2, \ldots, y_L$, where each $y_i$ is determined based on the contiguous subsequence of 1s to which $x_i$ belongs. Formally:
        \begin{equation}    
        y_i = f(x_1, \ldots, x_i) = \max_{0\leq j \leq i} \Big{(} \{ i-j+1 \mid \prod_{k=j}^i [x_k > 0] = 1\} \cup \{ 0\} \Big{)}
        \end{equation}
 
    \end{definition}
    where $[x_k > 0]$ is the Iverson bracket, equaling 1 if $x_k> 0$ and 0 otherwise.
    \begin{lemma} One channel of Mamba can express the count in row function for sequences of any length.
    \end{lemma}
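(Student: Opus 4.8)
The plan is to construct an explicit single-channel selective SSM whose output at position $i$ equals the length of the maximal run of positive entries ending at $i$. The key structural fact I would exploit is the closed form in Eq.~\ref{eq:unrolling3cord}: with a one-dimensional state ($N=1$), the output is $y_t = C_t \sum_{j=1}^t \big(\Pi_{k=j+1}^t \bar{A}_k\big)\bar{B}_j x_j$, where (dropping discretization by Assumption 1) $\bar{A}_k = A$ when the relevant gate is ``on'' and the per-token scalars $B_j, C_j$ are affine images of $x_j$. First I would choose the linear projections $S_B, S_C, S_\Delta$ so that, on the binary alphabet $\{0,1\}$, we get $\bar{A}_k = 1$ exactly when $x_k = 1$ and $\bar{A}_k = 0$ when $x_k = 0$ (i.e. the recurrence ``resets'' on a zero and ``carries'' on a one), and so that $\bar{B}_j = 1$ and $C_t = 1$ for all $t$. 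With these choices the product $\Pi_{k=j+1}^t \bar{A}_k$ equals $1$ precisely when $x_{j+1} = \cdots = x_t = 1$ and is $0$ otherwise, so $h_t = \sum_{j : x_{j+1}=\cdots=x_t=1} x_j$.

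The second step is the bookkeeping that turns this sum into the run length. If $x_t = 0$ then the run ending at $t$ has length $0$; and indeed with $x_t=0$ one checks the surviving terms force $x_j$ with $x_{j+1}=\cdots=x_t=1$, but the last factor $\bar A_t=0$ kills everything except we should be careful — so I would actually push the reset into $\bar B$ as well, or equivalently read off $y_t$ and then compose with the elementwise nonlinearities already present in the Mamba block (SiLU/gating) to clamp the $x_t=0$ case to $0$. If $x_t = 1$ and the maximal run ending at $t$ is $x_{t-\ell+1} = \cdots = x_t = 1$ with $x_{t-\ell} = 0$ (or $t-\ell = 0$), then the indices $j \in \{t-\ell+1, \ldots, t\}$ are exactly those with $x_{j+1} = \cdots = x_t = 1$, and each contributes $x_j = 1$, giving $h_t = \ell$. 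Hence $y_t = C_t h_t = \ell$, which is the count-in-row value. I would write this out as a short case analysis matching the Iverson-bracket definition in the Lemma statement.

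The main obstacle — really the only delicate point — is handling the discretization and the exact parametrization constraints of the real Mamba layer: $\bar{A} = \exp(\Delta A)$ forces $\bar A_k \in (0,1)$ rather than exactly $\{0,1\}$, and $\Delta = \mathrm{softplus}(S_\Delta(x))$ is strictly positive. I would deal with this either by invoking Assumption 1 (discarding discretization, justified by~\cite{gupta2022simplifying}) and working directly with a gate $\bar A_k \in \{0,1\}$, or, if one insists on the $\exp$ parametrization, by taking a limit: choose $S_\Delta$ so that $\Delta_k A \to -\infty$ when $x_k = 0$ (so $\bar A_k \to 0$) and $\Delta_k A \to 0^-$ when $x_k = 1$ (so $\bar A_k \to 1$), and argue the output converges to the exact count-in-row function, so it is expressible to arbitrary precision (or exactly, in the idealized setting the theorem adopts). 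The rest is the routine verification above; I would keep it brief and point to Eq.~\ref{eq:unrolling3cord} and Eq.~\ref{eq:snipleAttnNotations} for the algebraic backbone.
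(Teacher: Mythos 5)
Your construction is essentially the paper's proof: both choose the input-dependent transition $\bar{A}_t = x_t$ (carry on $1$, reset on $0$) together with $\bar{B}_t = C_t = 1$ under Assumption~1 (discretization dropped), so that the state accumulates exactly the length of the current run and $y_t$ reads it off. The only difference is cosmetic: your extra caution about the $x_t = 0$ case (pushing the reset into $\bar{B}$ or invoking the block's SiLU/gating) is unnecessary, since in the unrolled sum every term with $j < t$ carries the factor $\bar{A}_t = 0$ and the $j = t$ term equals $\bar{B}_t x_t = 0$, which is exactly the two-case argument the paper makes directly on the recurrence $h_t = \bar{A}_t h_{t-1} + \bar{B}_t x_t$.
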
 
    \begin{proof}
    Assumption 1 defines the following recurrence rule:
        \begin{equation} \label{eq:APPimeVariantMatrices1}
    \bar{B}_i = S_B (\hat{x}_i), \quad C_i = S_C (\hat{x}_i), \quad \bar{A}_i = S_{A}(\hat{x}_i) + A
\end{equation}

\begin{equation}\label{eq:TAPPiimeVariantMatrices2}
  h_t = \bar{A}_t h_{t-1} + \bar{B}_t x_t, \quad y_t = C_t h_t 
\end{equation}

By substituting $S_B, S_C, S_A = 1, A=0$ into Eq.~\ref{eq:TAPPiimeVariantMatrices2}, we obtain the following results:
\begin{equation}\label{eq:TAPPiimeVariantMatrices2sub1}
  h_t = h_{t-1} + x_t, \quad y_t = h_t 
\end{equation}
Now, there are two cases: (i) If $x_i = 0$, it's clear that both the state $h_t$ and the output $y_t$ receive zero values. (ii) Otherwise (if $x_i = 1$), we see that both $h_t$ and $y_t$ increase by one, clearly demonstrating that the entire mechanism exactly solves the count in row problem.

\end{proof}
\begin{lemma}
        One transformer head cannot express the count in row function for sequences with more than two elements.
\end{lemma}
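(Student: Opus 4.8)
The plan is to show that a single transformer head, under the paper's assumptions (scalar parameters, hidden dimension 1, causal attention with row-wise softmax), computes a weighted average of the value projections $W^V x_k$ over $k \le i$, and such an average cannot produce the sharply non-local behavior of the count-in-row function. First I would write out the single-head causal attention output at position $i$ explicitly: with scalars $W^Q, W^K, W^V$, the pre-softmax scores are $s_{ik} = (W^Q x_i)(W^K x_k)/\sqrt{d_k}$ for $k \le i$, and the output is $y_i = \sum_{k \le i} \mathrm{softmax}_k(s_{ik}) \, W^V x_k$. Since the inputs are binary ($x_k \in \{0,1\}$), each factor $W^K x_k$ takes only two values ($0$ or $W^K$), and likewise $W^Q x_i$; hence every score $s_{ik}$ is either $0$ or a single fixed constant $c := W^Q W^K / \sqrt{d_k}$ depending only on whether $x_i = 1$ and $x_k = 1$. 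Similarly $W^V x_k \in \{0, W^V\}$.

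The key structural observation to extract is therefore: $y_i$ depends on the prefix $x_1, \dots, x_i$ only through the two counts $n_1(i) = \#\{k \le i : x_k = 1\}$ and the length $i$ (equivalently, through $n_1(i)$ and $n_0(i) = i - n_1(i)$), together with the bit $x_i$ itself. Concretely, if $x_i = 1$ the softmax weights are proportional to $e^c$ on the $n_1(i)$ one-positions and $1$ on the $n_0(i)$ zero-positions, giving $y_i = W^V \cdot \frac{n_1(i) e^c}{n_1(i) e^c + n_0(i)}$; if $x_i = 0$ all scores are $0$, the softmax is uniform, and $y_i = W^V \cdot n_1(i)/i$. The main step is then a counting/collision argument: I would exhibit two sequences of the same length $L \ge 3$ that have the same value of $(x_i, n_1(i), i)$ at some position $i$ but different correct count-in-row outputs $y_i$. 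The simplest witness: take $L = 3$ and position $i = 3$; the sequences $x = (1,0,1)$ and $x = (0,1,1)$ both have $x_3 = 1$, $n_1(3) = 2$, $i = 3$, so a single head must assign them the identical output at position $3$ — but the count-in-row function gives $1$ for $(1,0,1)$ (the final $1$ is isolated: $x_2 = 0$ breaks the run) and $2$ for $(0,1,1)$ (the run $x_2 x_3$ has length $2$). This contradiction, holding for every choice of the scalar parameters, establishes the lemma; the "more than two elements" threshold is exactly where such a collision first becomes possible.

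The main obstacle, and the only place requiring care, is making the "depends only on $(x_i, n_1(i), i)$" claim airtight — in particular ruling out that positional encodings, biases, or any additional head machinery could break the permutation-invariance among the one-positions and among the zero-positions. I would handle this by being explicit that the lemma is stated for the bare single-head self-attention map as formalized earlier in the paper (no positional encoding in the attention scores beyond what the assumptions allow), and noting that since the query/key/value maps are the \emph{same} scalar linear map applied pointwise to each token, the attention output is genuinely a function of the multiset $\{(x_k)\}_{k \le i}$ weighted by scores that themselves depend only on $(x_i, x_k)$; permuting equal-valued tokens leaves everything fixed. If one wanted to allow additive positional terms in the scores, the collision argument can be rerun with a slightly larger $L$ and a symmetric placement of the differing bits, so the conclusion is robust; I would mention this but keep the clean $L=3$ witness as the headline. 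Finally I would remark that combined with Lemma 5 (one Mamba channel \emph{does} express count-in-row for all lengths) this yields the "conversely" half of the theorem, so no separate argument is needed there.
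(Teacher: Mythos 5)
Your proof is correct, and it takes a genuinely different route from the paper's. The paper argues by explicit computation: it assumes weights exist, evaluates the position-3 output on $(0,1,1)$ and $(0,0,1)$, solves the two resulting equations to force $\exp(W^Q W^K)=1$ and $W^V=3$, and then shows the input $(1,0,1)$ yields $O_3 = 2 \neq 1$. You instead prove an invariance statement — with scalar pointwise query/key/value maps and binary tokens, the head's output at position $i$ is a function of $\bigl(x_i, n_1(i), i\bigr)$ alone — and then exhibit the collision $(1,0,1)$ versus $(0,1,1)$ at $i=3$, which share these statistics but require outputs $1$ and $2$ respectively. Your witness pair reuses two of the paper's three test sequences, but the logical structure differs: you never solve for parameters, and the argument is insensitive to the exact form of the score nonlinearity (it goes through verbatim with the softmax removed or replaced by any pointwise reweighting, covering in one stroke the paper's closing remark that ``the same technique also works when omitting the softmax''). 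It also isolates the structural reason for failure — permutation invariance among equal-valued tokens in the absence of positional terms — which you are right to flag as the one step needing care under the paper's stated assumptions. The paper's version, in exchange, is fully concrete: it shows numerically how the constraints over-determine the two scalar degrees of freedom, in the same computational style as the rest of its appendix. Your closing observation that Lemma 3 (a Mamba channel expressing a transformer head) rather than Lemma 1 supplies the other half of the theorem is consistent with the paper's structure; the count-in-row lemmas together give the non-expressibility direction.
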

\begin{proof}
        The self-attention mechanism computes the output as follows
\begin{equation}
O = \text{softmax}\left(\frac{(XW^Q)(XW^K)^T}{\sqrt{d_k}}\right) \cdot (XW^V)
\end{equation}
Consider the count in row problem for a binary sequence of length 3, the i-th coordinate in the output can be computed by:
\begin{equation}\label{eq:attentionPerCordProof}
O_i = \sum_{j=1}^{3} \left( \frac{\exp\left((W^Q \cdot x_i) \cdot (W^K \cdot x_j)\right)}{\sum_{k=1}^{3} \exp\left((W^Q \cdot x_i) \cdot (W^K \cdot x_k)\right)} \right) \cdot (W^V \cdot x_j)
\end{equation}

where we omitted the scale factor $\sqrt{d_k}$ (which can be incorporated into the $W^Q$ matrix).

For the sake of contradiction, we will assume that there are weights for the key, query, and value matrices that solve this problem. Furthermore, recall that $W^Q, W^K, W^V \in \mathbb{R}$, according to Assumption 2. Hence:

\begin{enumerate}
    \item For $(x_1,x_2,x_3) = (0,1,1)$, the output $y_3 = 2$. Plugging it into Eq.~\ref{eq:attentionPerCordProof} yields:
    \begin{equation}\label{eq:case1}
    O_3  = W^V \Big{(} \frac{2\exp(W^Q W^K)}{1+2\exp(W^Q W^K)} \Big{)}\ = 2
   \end{equation}
    \item For $(x_1,x_2,x_3) = (0,0,1)$, the output $y_3 = 1$. Plugging it into Eq.~\ref{eq:attentionPerCordProof} yields:
    \begin{equation}\label{eq:case2}
    O_3  = W^V \Big{(} \frac{\exp(W^Q W^K)}{2+\exp(W^Q W^K)} \Big{)}\ = 1
    \end{equation}

    Dividing Eq.\ref{eq:case1} by Eq.\ref{eq:case2} results in the following: 
    $$
    2 \frac{2+\exp(W^Q W^K)}{1+2\exp(W^Q W^K)}= 2 \quad \rightarrow \quad  \exp(W^Q W^K)=1
    $$
\end{enumerate}
Upon plugging it into the eq.~\ref{eq:case1}, we obtained:
$$ O_3 = W^V \frac{2}{3} = 2 \quad \rightarrow \quad W^V = 3$$

However, for $(x_1,x_2,x_3) = (1,0,1)$, the output $y_3$ is 1, by plugging it to eq.~\ref{eq:attentionPerCordProof}, and substituting the values of $W^V$ and $\exp(W^Q W^K)$, we obtain:
$$
O_3 = 3 \frac{2\exp(W^Q W^K)}{1+2\exp(W^Q W^K)} = 2 \neq 1
$$
As requested. Please note that the same technique also works when omitting the softmax function.
\end{proof}

\begin{lemma}
    One channel of the selective state-space layer can express all functions that a single transformer head can express.
\end{lemma}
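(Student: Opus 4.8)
The plan is to construct, for an arbitrary single-head self-attention map with scalar parameters $W^Q, W^K, W^V \in \mathbb{R}$, a choice of Mamba parameters $S_A, S_B, S_C, A$ that reproduces it exactly. The key structural fact we exploit is Eq.~\ref{eq:MAMbaASmatmul}: the selective state-space layer is a data-controlled \emph{lower-triangular} operator $\tilde\alpha$ with entries $\tilde\alpha_{i,j} = C_i \big(\Pi_{k=j+1}^i \bar A_k\big)\bar B_j$, while the attention head is the (non-triangular, softmax-normalized) operator $\alpha_{i,j} = \exp(W^Q x_i W^K x_j)/\sum_k \exp(W^Q x_i W^K x_k)$ acting on values $W^V x_j$. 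So the first step is to observe that ``express all functions a transformer head can express'' should be read at the level of the overall input-output map on $\mathbb{R}^L \to \mathbb{R}^L$ (consistent with Assumption~2 reducing everything to scalars), and to reduce the claim to: for every input $x \in \mathbb{R}^L$ and every choice of head weights there exist Mamba weights producing the same output $y$.

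Next I would split into two regimes. The degenerate-but-instructive reduction: with $A=0$ and $S_A$ chosen so that $\bar A_k$ is whatever multiplicative ``carry'' factor is needed, the product $\Pi_{k=j+1}^i \bar A_k$ telescopes, and by picking $S_C, S_B$ (which are linear in $\hat x$) we get direct control over $C_i$ and $\bar B_j$ per token. The cleanest path is to make $\bar A_k \equiv 1$ (so the product is $1$) and then set $C_i = 1$ and $\bar B_j = W^V x_j$, which yields $y_i = \sum_{j\le i} W^V x_j$ — a causal cumulative sum, not attention. To recover genuine attention one instead needs the triangular matrix $\tilde\alpha$ to agree with the full attention matrix $\alpha$. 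Since attention is generally not lower-triangular, the honest statement must be that Mamba simulates the \emph{causal} transformer head (this is also what the paper's own framing in Sec.~\ref{sec:S6asAttn} — ``S6 can be considered a variant of causal self-attention'' — presupposes); for the causal head, $\alpha_{i,j}=0$ for $j>i$, and we must realize the $L(L+1)/2$ numbers $\{\alpha_{i,j} W^V x_j\}_{j\le i}$ as $\{C_i (\Pi_{k=j+1}^i \bar A_k)\bar B_j\}_{j\le i}$. The construction: fix $C_i = 1$ for all $i$; choose $\bar A_k$ (via $S_A(\hat x_k)+A$) freely per token as some $a_k \in \mathbb{R}$; then $\tilde\alpha_{i,j} = \big(\prod_{k=j+1}^i a_k\big)\bar B_j$, and we need $\big(\prod_{k=j+1}^i a_k\big)\bar B_j = \alpha_{i,j} W^V x_j$. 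Taking ratios across consecutive $i$ for fixed $j$ forces $a_{i} = \alpha_{i,j}/\alpha_{i-1,j}$, which depends on $j$ — so a single scalar channel with scalar state $N=1$ cannot in general hit an arbitrary causal attention pattern.

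The resolution — and the real content of the lemma — is that one ``channel'' of selective SSM has a state of size $N$ (Eq.~\ref{eq:unrolling3cord}), so $\tilde\alpha_{i,j}=\sum_{m=1}^N C_i[m]\big(\Pi_{k=j+1}^i \bar A_k[m,m]\big)\bar B_j[m]$ is a sum of $N$ rank-structured triangular terms, and by taking $N$ large enough (up to $L$) one can write \emph{any} lower-triangular matrix in this form — e.g., dedicating coordinate $m$ to ``turn on'' at step $m$ via $\bar A_k[m,m]=0$ for $k\le m$ and a suitable pattern thereafter, or more simply using the standard fact that every lower-triangular matrix is a sum of $L$ outer products supported on nested index sets. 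So the step order is: (1) restate the target as matching the causal-attention input-output map; (2) write the causal attention output as $y = M x$ with $M$ lower-triangular, $M_{i,j}=\alpha_{i,j}W^V$ (folding the $x_j$ normalization in); (3) show every lower-triangular $M \in \mathbb{R}^{L\times L}$ is of the form $\big[\sum_{m=1}^N C_i[m](\Pi_{k=j+1}^i \bar A_k[m,m])\bar B_j[m]\big]_{i,j}$ for $N \le L$ — a linear-algebra decomposition lemma; (4) read back the required $S_A, S_B, S_C$ from $A_k[m,m], B_j[m], C_i[m]$, which is possible since these projections are unconstrained linear maps of $\hat x$ (indeed $\hat x$ has enough coordinates, or one may append coordinates).

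The main obstacle I anticipate is step (3): precisely characterizing which matrices are realizable as $\big(\Pi_{k=j+1}^i \bar A_k\big)$-weighted sums and proving that $N\le L$ suffices for \emph{all} lower-triangular targets — in particular handling the case where the product $\Pi_{k=j+1}^i a_k[m,m]$ must vanish for some $(i,j)$ but not others, which forces some $\bar A_k[m,m]=0$ and constrains all larger windows through that coordinate. The clean way around this is to not insist on a single monolithic decomposition but to use one state coordinate per ``source position'' $j$: let coordinate $m$ carry the contribution of token $j=m$ alone, with $\bar A_k[m,m]=1$ for $k>m$, $\bar B_m[m]$ absorbing the value, and $C_i[m]=\alpha_{i,m}W^V$ — then $\tilde\alpha_{i,j}=C_i[j]\bar B_j[j]=\alpha_{i,j}W^V$ exactly, and the per-token linear projections $S_C,S_B$ only need to reproduce affine functions of $\hat x_i$, which a transformer head's $\alpha_{i,\cdot}$ also is (up to the softmax, which we are free to fold into $S_C$ as a fixed nonlinearity is \emph{not} available — so here one uses that $C_i[j]$ need only match on the given finite input, i.e. the ``express'' claim is pointwise-in-input, matching how Lemma for the count-in-row problem is argued). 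I would flag that this pointwise reading is the intended one and makes the argument go through cleanly, whereas a uniform-function reading would require the softmax to be expressible by $S_C$, which it is not.
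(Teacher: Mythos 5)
Your proposal ends up taking a different — and ultimately flawed — route from the paper, and the divergence hides a genuine gap. The paper's proof is essentially one line: under its stated simplifications it drops the discretization (Assumption 1) and, crucially, proves the lemma for the causal attention head \emph{without softmax}, on the explicit grounds that softmax only normalizes (its companion negative lemma likewise notes it holds without softmax, so the theorem is consistent under that reading). It then sets $\bar A_i$ to the identity via $A=\mathbb{I}$, $S_A=0$, so every product $\Pi_{k=j+1}^i \bar A_k$ collapses and $\tilde\alpha_{i,j}=C_i\bar B_j=S_C(\hat x_i)S_B(\hat x_j)$ (Eq.~\ref{eq:attnPerlocation}) coincides with the causal $Q_iK_j$ form uniformly over all inputs. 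You actually pass through this configuration ($\bar A_k\equiv 1$) but discard it because you insist on reproducing the softmax-normalized matrix, and that insistence is what breaks your argument: in your final construction you need $C_i[m]=\alpha_{i,m}W^V$, yet $C_i=S_C(\hat x_i)$ is a fixed linear map of the single (scalar) token $\hat x_i$, whereas the softmax weight $\alpha_{i,m}$ depends on the entire prefix through its normalizer. Even under your "pointwise-in-input" reading the construction fails: for a fixed input you would need $s_m x_i=\alpha_{i,m}W^V$ for all $i\ge m$, i.e. $\alpha_{i,m}/x_i$ constant in $i$, which is false in general since positions with equal token values still have different softmax normalizers. Moreover, the pointwise reading misrepresents the paper: the count-in-row construction matches its target on \emph{all} inputs, not per input, so retreating to pointwise matching would change the statement rather than prove it.

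Two parts of your write-up are worth keeping. The ratio argument showing that a scalar-state channel cannot realize an arbitrary lower-triangular data-controlled matrix is correct, and the idea of dedicating one state coordinate per source position is a sound decomposition of an arbitrary lower-triangular matrix with freely chosen entries — it just cannot be driven by per-token linear projections when the target entries are softmax weights. If you adopt the paper's softmax-free reading, your own $\bar A_k\equiv 1$ branch already finishes the proof once you choose the projections correctly: instead of $C_i=1$, $\bar B_j=W^Vx_j$ (which indeed gives only a cumulative sum), take $S_C$ and $S_B$ with $s_Cs_B=W^QW^KW^V$, so that $\tilde\alpha_{i,j}x_j=s_Cs_B\,x_ix_j^2$ equals the softmax-free causal head's term $(W^Qx_i)(W^Kx_j)(W^Vx_j)$ for every input — no large state size $N$, no pointwise matching, and no appeal to a decomposition lemma is needed.
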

\begin{proof}
For simplicity, we consider a causal attention variant without softmax, as the softmax is designed to normalize values rather than improve expressiveness. According to Assumption 1, we omit the discretization. Thus, we can simply set the value of $A_i$ to $\mathbb{I}$  which is the identity, by substitute $A = \mathbb{I}$ and $S_A = 0 $. Hence, it is clear that Eq. 11 and Eq. 12 become identical to causal attention, except for the softmax function. 
\end{proof}
\end{proof}

\section{Expressiveness of SSMs and Long-Convolution Layers}\label{app:mixing}
\begin{theorem}
(i) S4~\cite{gu2021efficiently}, DSS~\cite{dss}, S5~\cite{smith2022simplified} have fixed mixing elements. (ii) GSS~\cite{gss},and Hyena~\cite{poli2023hyena} have fixed mixing elements with diagonal data-control mechanism. (iii) Selective SSM have data-controlled non-diagonal mixers.
\end{theorem}
\begin{proof}
    We will prove this theorem separately per each layer:
    
    {\noindent\textbf{S4, DSS:\quad}} Both layers implicitly parametrize a convolution kernel $\bar{K}$ via the $A,\bar{B}$ and $\bar{C}$ matrices as follows: $$\bar{K} = (C\bar{B}, C\bar{A}\bar{B}, \cdots, C\bar{A}^{L-1}\bar{B} )$$
    This kernel does not depend on the input, and it is the only operation that captures interactions between tokens. Therefore, both layers have fixed elements.
    {\noindent\textbf{S5:\quad}} The S5 layer extend S4 such that it map multi-input to multi-output rather than mapping single-input to single-output. It use the following recurrent rule:
    \begin{equation}
        h_t = \bar{A}h_{t-1} + \bar{B}x_t,\quad y_t = C h_t,\quad \bar{A} \in \mathbb{R}^{P \times P}, \quad \bar{B} \in \mathbb{R}^{P \times H}, C \in \mathbb{R}^{H \times P}, x_t,y_t \in \mathbb{R}^H
    \end{equation}
    which can be computed by 
    \begin{equation}
        y_t = C \sum_{i=1}^t \bar{A}^{t-i}\bar{B}x_t
    \end{equation}
    However, in contrast to S4 and DSS, now $C\bar{A}^i \bar{B}$ in $\mathbb{R}^{H \times H}$ instead of in $\mathbb{R}$. Hence, we can conclude that the mechanism mixes tokens in a fixed pattern, which is captured by  $ C \sum_{i=1}^t \bar{A}^{t-i}\bar{B}x_t$.

    {\noindent\textbf{GSS:\quad}} GSS enhances the DSS framework, which utilizes fixed mixing elements, by incorporating an elementwise gating mechanism. Hence, the entire layer can be viewed as a composition of two operators, a mixer that isn't data-dependent (DSS), and an elementwise data-dependent gating, which is equivalent to a diagonal data-control linear operator.

    {\noindent\textbf{Hyena:\quad}} The Hyena layer is defined by the recurrence of two components: long implicit convolution and elementwise gating. For simplicity, we consider single recurrence steps to constitute the entire layer, since any layer can benefit from such a recurrent-based extension. Additionally, single recurrence is the most common application of the Hyena layer. Hence, similar to GSS, the layer can be viewed as a composition of a mixer that isn't data-dependent (based on CKConv~\cite{romero2021ckconv}) and a diagonal data-control operator, which is implemented through elementwise data-dependent gating. 
    
    {\noindent\textbf{Selective SSM:\quad}} As can be seen in Eq. 10 and 19, the selective SSM can be represented by:
\begin{equation}
y = \tilde{\alpha} x, \quad \tilde{\alpha}_{i,j} = \tilde{Q}_i \tilde{H}_{i,j}\tilde{K}_j
\end{equation}
Thus, it's clear that the linear operator, which relies on $\tilde{\alpha}$, is a data-controlled, non-diagonal mixer.

\end{proof}
\end{document}